%%%%%%%%%%%%%%%%%%%%%%%%%%%%%%%%%%%%%%%%%%%%%%%%%%%%%%%%%%%%%%%%%%%%%%%%%%%%%%%%
%2345678901234567890123456789012345678901234567890123456789012345678901234567890
%        1         2         3         4         5         6         7         8

\documentclass[letterpaper, 10 pt, conference]{ieeeconf}  % Comment this line out if you need a4paper

\IEEEoverridecommandlockouts                              % This command is only needed if
                                                          % you want to use the \thanks command

\overrideIEEEmargins                                      % Needed to meet printer requirements.

%In case you encounter the following error:
%Error 1010 The PDF file may be corrupt (unable to open PDF file) OR
%Error 1000 An error occurred while parsing a contents stream. Unable to analyze the PDF file.
%This is a known problem with pdfLaTeX conversion filter. The file cannot be opened with acrobat reader
%Please use one of the alternatives below to circumvent this error by uncommenting one or the other
%\pdfobjcompresslevel=0
%\pdfminorversion=4

% See the \addtolength command later in the file to balance the column lengths
% on the last page of the document

% The following packages can be found on http:\\www.ctan.org
%\usepackage{graphics} % for pdf, bitmapped graphics files

%\usepackage{epsfig} % for postscript graphics files
%\usepackage{mathptmx} % assumes new font selection scheme installed
%\usepackage{times} % assumes new font selection scheme installed
\usepackage{amsmath} % assumes amsmath package installed
\usepackage{amssymb}  % assumes amsmath package installed
\usepackage{graphicx}
\usepackage{float}
\usepackage{cite}
\usepackage{url}
\usepackage{booktabs}
\usepackage{xcolor}

\usepackage{algorithm}
\usepackage[noend]{algpseudocode}
\usepackage[hidelinks]{hyperref}
\usepackage{subfig}
\usepackage[font={small,it}]{caption}
\usepackage[noabbrev]{cleveref}
\usepackage{cuted}
\setlength\stripsep{3pt plus 1pt minus 1pt}

\usepackage{amsthm}

\makeatletter
\let\OldStatex\Statex
\renewcommand{\Statex}[1][3]{%
  \setlength\@tempdima{\algorithmicindent}%
    \OldStatex\hskip\dimexpr#1\@tempdima\relax}
\makeatother

\urldef{\smith}\url{stephen.smith@uwaterloo.ca}
\urldef{\deiaco}\url{ryan.deiaco@uwaterloo.ca}
\urldef{\czarnecki}\url{kczarnec@gsd.uwaterloo.ca}

\newtheorem{theorem}{Theorem}

\theoremstyle{definition}
\newtheorem{definition}{Definition}[section]

\title{\LARGE \bf
Universally Safe Swerve Manoeuvres for Autonomous Driving
}

\author{Ryan De Iaco, Stephen L. Smith, and Krzysztof Czarnecki% <-this % stops a space
\thanks{This work was supported by the Government of Ontario.}% <-this % stops a space
\thanks{The authors are with the Department of Electrical and Computer Engineering, University of Waterloo, Waterloo ON, N2L 3G1, Canada (\deiaco; \smith, \czarnecki)}}

\begin{document}

\maketitle
\thispagestyle{empty}
\pagestyle{empty}

%%%%%%%%%%%%%%%%%%%%%%%%%%%%%%%%%%%%%%%%%%%%%%%%%%%%%%%%%%%%%%%%%%%%%%%%%%%%%%%%
\begin{abstract}

This paper characterizes safe following distances for on-road driving 
when vehicles can avoid collisions by either braking or by swerving 
into an adjacent lane.
In particular, we focus on safety as defined in 
the Responsibility-Sensitive Safety (RSS) framework.
We extend RSS by introducing swerve manoeuvres as a valid
response in addition to the already present brake manoeuvre. These
swerve manoeuvres use the more realistic kinematic bicycle model rather than 
the double integrator model of RSS. When vehicles are able to swerve and brake, it is shown 
that their required safe following distance at higher speeds is less than
that required through braking alone. In addition, when all vehicles follow this new distance, they are provably safe. The use of the kinematic bicycle model is then validated by comparing these swerve manoeuvres to that of a dynamic single-track model.

\end{abstract}

%%%%%%%%%%%%%%%%%%%%%%%%%%%%%%%%%%%%%%%%%%%%%%%%%%%%%%%%%%%%%%%%%%%%%%%%%%%%%%%%
\section{Introduction}

The main bottleneck for the public acceptance and ubiquity of autonomous driving is
the current lack of safety guarantees. There are three
main ways to establish the safety of an autonomous vehicle. The first involves
measuring crash statistics over a large number of autonomously driven kilometres and
comparing them to the equivalent human rates for each category of collision severity. 
However, particularly with severe collisions, the number of kilometres
required to establish a statistically significant collision rate renders this method
impractical for establishing safety.

An alternative method for determining the safety of a system is through scenario-based
verification~\cite{thorn_kimmel_chaka_2018}. 
This method uses a set of scenarios that
validate the vehicle's behaviour across a representative set of situations. The goal
is for the set of scenarios to capture most of the required driving behaviour
necessary for safe driving. However, it is difficult to construct such a set of 
scenarios that captures all of the challenging conditions faced by an autonomous vehicle~\cite{abeysirigoonawardena_shkurti_dudek_2019}.

A third approach for verifying the safety of a system is formally proving the behaviour
of a vehicle is safe~\cite{althoff_althoff_wollherr_buss_2010,althoff_dolan_2014,
DBLP:journals/corr/abs-1708-06374,leung_schmerling_chen_talbot_gerdes_pavone_2018}.
In order to compute useful safety bounds, these works often include simplifying assumptions.
The difficulty with this 
method lies in selecting reasonable assumptions to make. Generally, the stronger the assumptions
made, the easier to prove the system is safe. However, if the assumptions are too strong, 
they may not hold in general driving scenarios. An additional challenge with this method
is that to prove safety, the driving behaviour may need to be conservative, or highly
restrictive.

\begin{figure}[thpb]
  \centering
  \subfloat[\label{fig:brake_example}]{\includegraphics[scale=0.35]{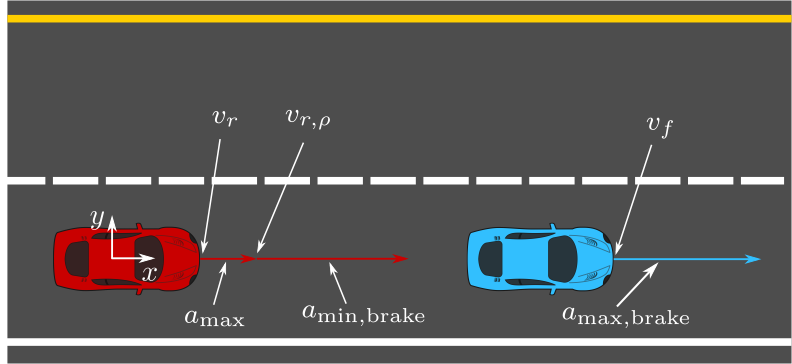}}\qquad
  \subfloat[\label{fig:swerve_example}]{\includegraphics[scale=0.35]{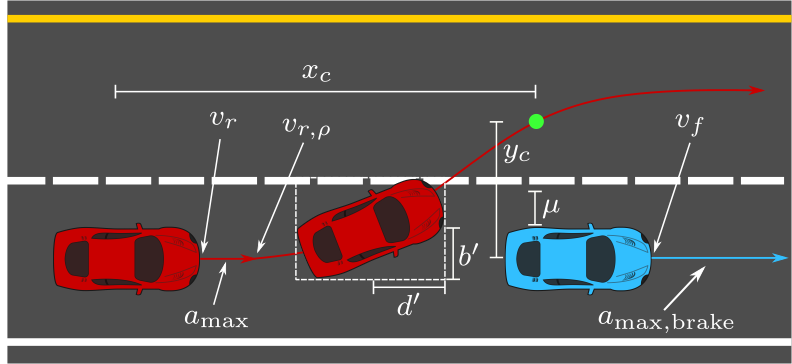}}\qquad
  \subfloat[\label{fig:brake_for_swerve_example}]{\includegraphics[scale=0.35]{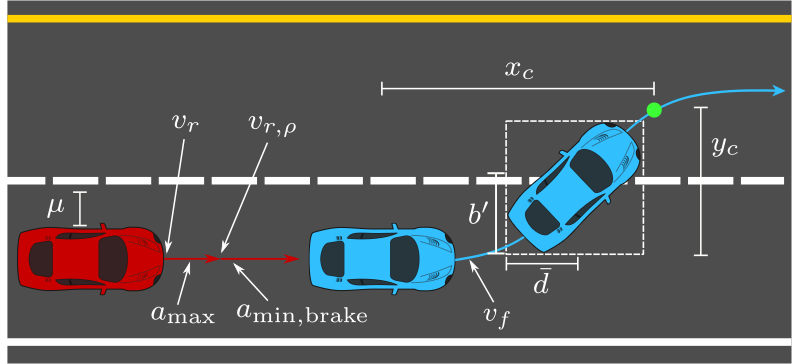}}\qquad
  \subfloat[\label{fig:swerve_for_swerve_example}]{\includegraphics[scale=0.35]{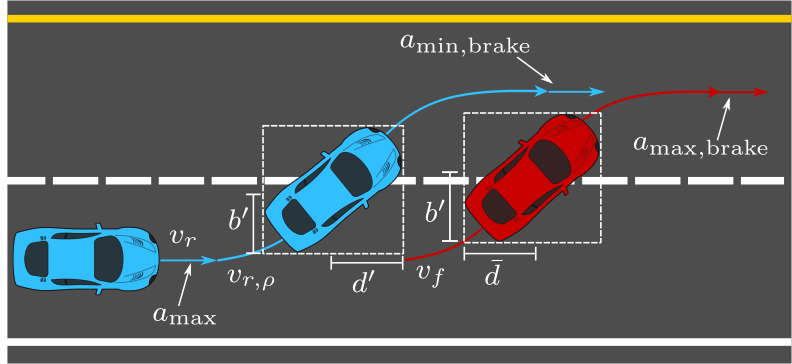}}\qquad
  \caption{
    \label{fig:swerve_brake_comparison}
    (a) The standard RSS braking manoeuvre for a braking leading vehicle. Velocity and
    acceleration arrows point to path segments where they occur.
    (b) The proposed swerve manoeuvre for a leading braking vehicle. The green dot represents the
    lateral clearance distance $y_c$ required by RSS.
    (c) The braking manoeuvre required for a swerving leading vehicle.
    (d) The swerving manoeuvre required for a swerving leading vehicle.
  }
\end{figure}

This paper aims to address the latter issue, especially as it pertains to the 
Responsibility-Sensitive Safety (RSS) framework~\cite{DBLP:journals/corr/abs-1708-06374}.
Fundamental to the RSS framework is its assumption of
responsibility, and that vehicles have a duty of care to one another.
The assumption of responsible
behaviour allows for the autonomous vehicle to make meaningful progress in the driving task. 
Under other frameworks that assume adversarial vehicles, the autonomous vehicle often exhibits 
over-conservative behaviour that impedes progress.
This assumption of responsible behaviour allows for the computation of safe following distances
such that vehicles can 
comfortably brake for a braking vehicle in front of them, without causing a collision. 
This following distance is a function of both vehicle's speeds and maximum accelerations, as well as 
the reacting vehicle's reaction time. When computing this following distance, the vehicles are modeled by 
a kinematic particle model.
As long as this following distance is maintained by all vehicles, no collisions can occur.

This paper extends this framework to include swerve manoeuvres feasible
for the kinematic bicycle model as a valid response, in addition to the standard braking response. In doing so, vehicles are able to follow at
closer following distances at higher speeds, allowing for more efficient use of the road network. Using swerve manoeuvres feasible for the kinematic bicycle model ensures that these manoeuvres are more realistic than those possible under the particle model used in RSS.

\subsection{Contributions}

The contributions in this paper are as follows. The first is the derivation of safe following distances in scenarios where vehicles perform swerve manoeuvres.
In addition to the scenario of braking for a braking vehicle considered in 
RSS, we consider the additional scenarios of swerving for a braking vehicle, 
braking for a swerving vehicle, and swerving 
for a swerving vehicle. These distances are then used to form a novel 
extension of the RSS framework: a shorter following distance at high
speeds. When all vehicles in a road network follow this new distance 
as well the rest of the RSS framework, 
and satisfy our assumptions, they are provably safe from collision.

The second contribution is a validation of our use of the kinematic bicycle model by comparing our
swerve manoeuvres to manoeuvres generated under a dynamic single-track model~\cite{gerdts_2005}.
As part of this dynamic model, we include a Pacejka tire model~\cite{hans_bakker_1992} to account for road surface traction.
We show that the kinematic model, when lateral acceleration is constrained, can accurately
estimate the longitudinal distance required to perform swerve manoeuvres
using the dynamic model. 

\subsection{Related Work}\label{sec:related_work}

Previous work on swerve manoeuvres for autonomous driving have often focused on feasible
manoeuvres according to various kinodynamic models~\cite{schmidt_oechsle_branz_2006}.
In particular, many of these papers have assumed some variant of the bicycle 
model~\cite{shiller_zvi_sundar_satish_2005,shiller_sundar_1996,shiller_sundar_1995,
dingle_guzzella_2010} and performed optimization to generate optimal swerve manoeuvres.
However, under these models the optimal solution is not generated through a closed form
solution, which makes formally proving safety challenging.

Other work has instead simplified the vehicle model to a point mass 
model~\cite{shiller_sundar_1998,jula_kosmatopoulos_ioannou_2000,pek_zahn_althoff_2017} in order 
to yield closed form, optimal solutions. However, this comes at the cost of the nonholonomic
constraint present in the bicycle model, which result in manoeuvres that would be 
unrealistic for a car to execute.
The goal with this work is to yield closed form, feasible solutions to swerve
manoeuvre boundary condition problems, while still preserving the kinematic constraints that 
allow the manoeuvre to be executable by a real vehicle.

Previous work on using the kinematic bicycle model for autonomous driving has shown it is an 
effective model for tracking trajectories in MPC~\cite{kong_pfeiffer_schildbach_borrelli_2015}, and as such, contains important kinematic constraints that capture some of the limits of vehicle motion. 
Past work has also shown that the kinematic bicycle model is an accurate approximation to vehicle motion at low accelerations~\cite{polack_altche_dandrea-novel_fortelle_2017}, which we expect to see as well in our validation.

\section{Preliminaries}

\subsection{Responsibility-Sensitive Safety (RSS)}
\label{sec:preliminaries}

In this paper, we rely on two aspects of the RSS framework; the longitudinal and lateral
safe distances required between two vehicles. In particular, we examine how the
equivalent longitudinal safe distance for a swerve manoeuvre compares to that of a
brake manoeuvre, while maintaining an appropriate lateral safe distance when required.
In this work, we compare swerve manoeuvres moving to the left as in Figure~\ref{fig:swerve_brake_comparison}, however, the same analysis can be applied to swerves moving to the right.

In RSS, safe distances are a function of several variables that describe the situation. The initial speed of the
rear autonomous vehicle is given by $v_r$, and the initial speed of the front vehicle is denoted
by $v_f$. The reaction time is given by $\rho$. The interpretation of the reaction time is the duration
after which a vehicle can apply a mitigating action. During the reaction time, both
vehicles apply the most dangerous acceleration possible, $a_{\max,\text{accel}}$, $a_{\max,\text{brake}}$ in 
the longitudinal case, and $a_{\max}^{\text{lat}}$ in the lateral case. To ensure passenger comfort, as well as to
prevent tailgater safety issues, the mitigating
reaction of the rear vehicle is assumed to be a comfortable deceleration, denoted $a_{\min,\text{brake}}$. This term comes from RSS, and is interpreted as the threshold for a safe, responsible braking response for the autonomous car. Note that these accelerations are magnitudes. 

We denote the positive part of an expression with $[\cdot]_+$.
Velocities are signed according to Figure~\ref{fig:brake_example},
and accelerations are unsigned parameters of the framework.
If the post-reaction speeds $v_{r, \rho}$ and $v_{f, \rho}$ are given by
\begin{equation} v_{r,\rho} = v_r + a_{\max,\text{accel}} \rho, \end{equation}
\begin{equation} v_{r,\rho}^{\text{lat}} = v_{r}^{\text{lat}} - a_{\max}^{\text{lat}} \rho, \end{equation}
\begin{equation} v_{f,\rho}^{\text{lat}} = v_{f}^{\text{lat}} + a_{\max}^{\text{lat}} \rho, \end{equation}
the \textit{longitudinal and lateral safe distances} are given by
\begin{multline}
d_{\text{long}} = \biggl[ v_r \rho + \frac{1}{2} a_{\max,\text{accel}}\rho^2 + \\ 
\frac{(v_r + v_{r,\rho})^2}{2 a_{\min,\text{brake}}} - \frac{v_f^2}{2 a_{\max,\text{brake}}} \biggr]_+ \label{eq:rss_long_dist},
\end{multline}
\begin{multline}
d_{\text{lat}} = \mu + \biggl[ -\left(\frac{v_{r}^{\text{lat}} + v_{r,\rho}^{\text{lat}}}{2}\right)\rho +
\frac{(v_{r,\rho}^{\text{lat}})^2}{2a_{\min}^{\text{lat}}} + \\ \frac{v_{f}^{\text{lat}} + v_{f,\rho}^{\text{lat}}}{2}\rho + \frac{(v_{f,\rho}^{\text{lat}})^2}{2a_{\min}^{\text{lat}}} \biggr]_+. \label{eq:rss_lat_dist}
\end{multline}

The longitudinal safe distance is between the frontmost point of the rear vehicle and the rearmost point of the front vehicle along the longitudinal direction,
and the lateral safe distance is between the rightmost point of
the rear vehicle and the leftmost point of the front vehicle along the lateral direction. These are left implicit in the original
RSS formulation, but since swerves involve rotation of the chassis, we make them explicit in this work.
The longitudinal safe distance $d_{\text{long}}$ is the distance required such that the rear vehicle can maximally accelerate during its
reaction time, then minimally decelerate to a stop, all while the front vehicle is maximally braking, without
causing a collision.
The lateral safe distance $d_{\text{lat}}$ is the distance required such that both vehicles can maximally accelerate towards each
other during the reaction time $\rho$, then minimally decelerate until zero lateral velocity, while
still maintaining at least a $\mu$ distance buffer.

When computing safety for swerve manoeuvres, the vehicle must maintain these safe distances with other relevant vehicles. These vehicles are relevant according to longitudinal and lateral adjacency, as defined below.
We denote the vehicle dimensions $d_f$, $d_r$, $b_l$, $b_r$ as in Figure~\ref{fig:outer_approx}.
\begin{definition}
If $x_1$, $x_2$ denote the longitudinal position of each vehicle, and then the vehicles are \textit{laterally adjacent} if $x_2 - d_r - d_f \leq x_1 \leq x_2 + d_r + d_f$.
\end{definition}
\begin{definition}
If $y_1$, $y_2$ denotes the lateral position of each vehicle, then the vehicles are  
\textit{longitudinally adjacent} if $y_2 - b_l - b_r - d_{lat} \leq y_1 \leq y_2 + b_l + b_r + d_{lat}$.
\end{definition}
Combining the definitions for safe distances and adjacency gives us a definition of safety.
\begin{definition}
A vehicle is \textit{laterally/longitudinally safe} from another vehicle if it is not laterally/longitudinally adjacent 
to the other vehicle, or if it
is laterally/longitudinally adjacent to the other vehicle and there is at least $d_{\text{lat}}$/$d_{\text{long}}$ 
of distance between them. 
\end{definition}

\begin{definition}
For a swerving vehicle and a non-swerving vehicle, as well as a given swerve manoeuvre, we define the \textit{lateral clearance distance}, $y_c$, as the earliest point in the swerve at which the swerving vehicle is no longer longitudinally adjacent to the non-swerving vehicle. 
\end{definition}
In Figure~\ref{fig:swerve_brake_comparison}, $y_c$ is reached at the green dot along the swerve.
The lateral clearance distance allows us to compute the longitudinal distance covered
by the swerve, which is denoted by $x_c$. We then use $x_c$ to compute the equivalent of
$d_{\text{long}}$ for a swerve manoeuvre, and compare it to 
\Cref{eq:rss_long_dist}.

\subsection{Vehicle Models}
The analysis in this paper relies upon three different kinodynamic models. The first is the 
particle kinematic model, which is used in the RSS framework. Through all of
these kinematic models, $x$ is longitudinal displacement and $y$ is lateral displacement. 
The control input is the acceleration in each dimension
\begin{align}
\label{eq:pm}
\ddot{x} = a_x, && \ddot{y} = a_y.
\end{align}

\begin{figure}
  \centering
  \subfloat[\label{fig:bicycle_model}]{\includegraphics[scale=0.4]{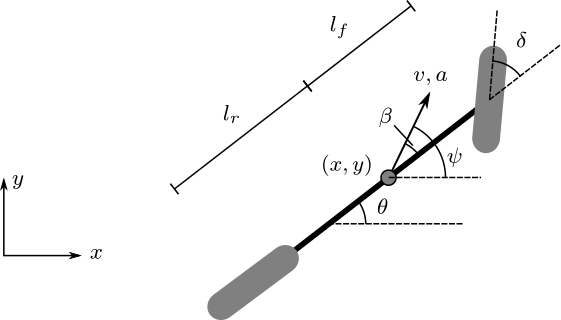}}\qquad
  \subfloat[\label{fig:dynamic_model}]{\includegraphics[scale=0.4]{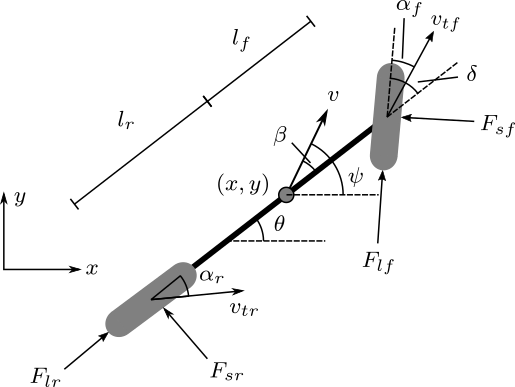}}\qquad
  \caption{
    \label{fig:kinematic_models}
    (a) The kinematic bicycle model, along with its associated variables.
    (b) The dynamic single track model used for validation~\cite{gerdts_2005}. Drag forces are
    omitted for simplicity, but are included in our computation.
  }
\end{figure}

When computing swerve manoeuvres, we wish to model the non-holonomic constraints on a car's motion to make the manoeuvres realistic. To do so, we rely on the kinematic bicycle
model, a model commonly used in autonomous 
driving~\cite{snider_2009,luca_oriolo_samson_1998,kong_pfeiffer_schildbach_borrelli_2015}.
This is illustrated in Figure~\ref{fig:bicycle_model}.
In this model, $v$ is the velocity of the vehicle, $\psi$ is the heading of velocity at the centre of mass,
$\theta$ is the yaw of the chassis,
$\beta$ is the slip angle of the centre of mass relative to the chassis, $a$ is the input 
acceleration, $\delta$ is the input steering angle, $R_c$ is the turning radius of the center of mass, 
and $l_r$ and $l_f$ are the distances from 
the rear and front axle to the centre of mass, respectively

\begin{align}
\label{eq:bicycle_model}
\dot{x} &= v\cos(\psi + \beta), & \beta &= \tan^{-1}\left( \frac{l_r}{l_r + l_f} \tan(\delta) \right), \nonumber \\
\dot{y} &= v\sin(\psi + \beta),& \theta &= \psi - \beta, \nonumber \\
\dot{\theta} &= \frac{v\tan(\delta)}{l_r + l_f} , & |\delta| &\leq \delta_{\max} \nonumber \\
\dot{v} &= a, & |a_{\text{lat}}| &= \frac{v^2}{R_c} \leq a^{\text{lat}}_{\min}, \nonumber \\
R_c &= \frac{l_r + l_f}{\cos(\beta)\tan(\delta)}, & -a_{\text{brake},\min} &\leq a \leq a_{\max}.   
\end{align}

Finally, to verify our kinematic approximation is valid, we compare our swerve manoeuvres
to those executed by a dynamic single-track vehicle model~\cite{gerdts_2005} with tires 
modelled using the Pacejka tire model~\cite{hans_bakker_1992}. This model is shown in
Figure~\ref{fig:dynamic_model}.
In this vehicle model, $v$, $\psi$, $\beta$, $\delta$, $l_f$, and $l_r$ are the same as 
the bicycle model. The slip angles of the front and rear tires are $\alpha_f$ and $\alpha_r$, respectively. The lateral tire forces on the front and rear tires are denoted $F_{\textit{sf}}$ and $F_{\textit{sr}}$, respectively, and
$F_{\textit{lf}}$ and $F_{\textit{lr}}$ denote the longitudinal tire forces at the front and rear tires, 
respectively. The drag mount point is denoted $e_{\textit{SP}}$, and $F_{\textit{Ax}}$ and $F_{\textit{Ay}}$ are the longitudinal
and lateral drag forces, respectively. The yaw rate is $\omega_z$, and $\omega_{\delta}$ is the input 
steering rate. The mass of the car is $m$, and $I_{\textit{zz}}$ is the inertia about the $z$-axis.
We omit the equations of motion for brevity, but they are presented in the 
reference~\cite{gerdts_2005}.

\section{Problem Formulation}\label{sec:problem_formulation}

The fundamental problem this paper addresses is to compute the longitudinal safe distance required when there is a free lane (or shoulder)
to the left or right of the vehicle, allowing for an evasive swerve manoeuvre. This requires knowing the longitudinal safe distance required for the scenarios illustrated in Figure~\ref{fig:swerve_brake_comparison}.
As can be seen, when computing the longitudinal safe distances for swerves, one
needs to consider both longitudinal and lateral clearance, since swerves involve lateral
and longitudinal displacement.

Since vehicles rotate during swerves, rotation must be compensated for when
computing these clearances. After compensating for rotation, the longitudinal swerve distance $x_c$ can 
then be used to compute the longitudinal safe distance required for a swerve. 
In RSS, safety was proved for a particle model. This paper
extends those results to prove the safety for swerves feasible for the kinematic bicycle model.
It is then shown how this result can be applied to more general models in 
Section~\ref{sec:validation_and_results}.
This task then breaks down into five subproblems.

\textbf{Subproblem 1.} Given the initial speed of a swerving vehicle $v_r$, 
the vehicle dimensions $d_f$, $d_r$, $b_l$, $b_r$ as in Figure~\ref{fig:outer_approx}, 
and parameters $\mu$ and $\rho$, compute a lateral clearance distance $y_c$ sufficient for lateral safety when a swerving vehicle becomes laterally adjacent to a lead vehicle. 

\textbf{Subproblem 2.} Given the kinematic constraints in (\ref{eq:bicycle_model}),
the initial vehicle speeds $v_r$ and $v_f$, the lateral clearance distance $y_c$, and parameters $\rho$, 
$a_{\max}$, $a_{\min,\text{brake}}$, $a_{\max,\text{brake}}$, $a^{\text{lat}}_{\max}$, and
$a^{\text{lat}}_{\min}$, compute a longitudinal safe distance
sufficient for safety when swerving for a braking lead vehicle. This is illustrated in Figure~\ref{fig:swerve_example}.

\textbf{Subproblem 3.} Given the initial vehicle speeds $v_r$ and $v_f$, the clearance point 
$y_c$, and parameters $\rho$, $a_{\max}$, $a_{\min,\text{brake}}$, $a^{\text{lat}}_{\max}$, and 
$a^{\text{lat}}_{\min}$, compute 
a longitudinal safe distance sufficient for safety when braking for a swerving lead vehicle. This is illustrated in
Figure~\ref{fig:brake_for_swerve_example}.

\textbf{Subproblem 4.} Given the kinematic constraints in (\ref{eq:bicycle_model}),
the initial vehicle speeds $v_r$ and $v_f$, the parameters $\rho$, 
$a_{\max}$, $a_{\min,\text{brake}}$, $a_{\max,\text{brake}}$, $a^{\text{lat}}_{\max}$, and
$a^{\text{lat}}_{\min}$, compute a longitudinal safe distance
sufficient for safety when swerving behind a swerving lead vehicle. This is illustrated in
Figure~\ref{fig:swerve_for_swerve_example}.

\textbf{Subproblem 5.} Given longitudinal safe distance sufficient for safety
when swerving for a braking vehicle, braking for a swerving lead vehicle, and swerving
for a swerving lead vehicle, compute a longitudinal safe distance akin to $d_{\text{long}}$
that is sufficient for universal safety when maintained by all vehicles on the road.

The first subproblem is addressed in Section~\ref{sec:method_lat_clear}, the second in
Section~\ref{sec:method_swerve_for_brake}, the third in 
Section~\ref{sec:method_braking_for_swerving_vehicle},
the fourth in Section~\ref{sec:method_swerve_for_swerve},
and the fifth in Section~\ref{sec:method_universal_following_distance}.

The work in this paper makes the following assumptions on responsible behaviour:
\begin{enumerate}
\item \label{swerve_brake_assumption} A vehicle will only perform a swerve manoeuvre
if it is not braking, and will only perform a brake manoeuvre if it is
not swerving.

\item \label{reasonable_manoeuvre_assumption} For every swerve manoeuvre, each 
vehicle reaches the lateral clearance distance only once. As a result, once a vehicle has
committed to a lane change by reaching the lateral clearance distance, it will
not return to its previous lane.

\item \label{forward_motion_assumption} Each vehicle moves forward along the road,
$v \geq 0$ and $\frac{-\pi}{2} \leq \psi \leq \frac{\pi}{2}$.
\end{enumerate}

\section{Computing the Longitudinal Safe Distance}
\label{sec:method}

\subsection{Lateral Clearance Distance}
\label{sec:method_lat_clear}

To compute the lateral clearance distance $y_c$, we modify
Equation~\eqref{eq:rss_lat_dist} to account for vehicle rotation. 
If we know the maximum chassis yaw $\theta_{\max}$ during the manoeuvre,
we can compute an axis-aligned bounding rectangle as an outer approximation to the vehicle footprint.
This is useful for safety analysis, and is illustrated in Figure~\ref{fig:outer_approx}.

\begin{figure}[thpb]
  \centering
  \subfloat[\label{fig:outer_approx}]{\includegraphics[scale=0.4]{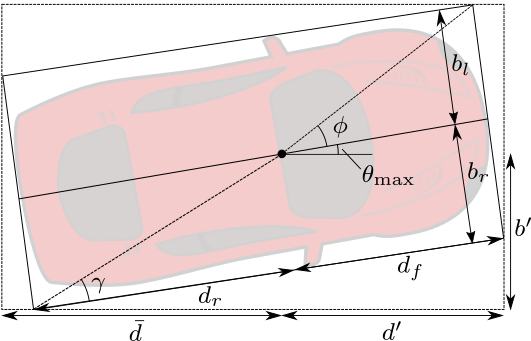}}\qquad
  \subfloat[\label{fig:inner_approx}]{\includegraphics[scale=0.4]{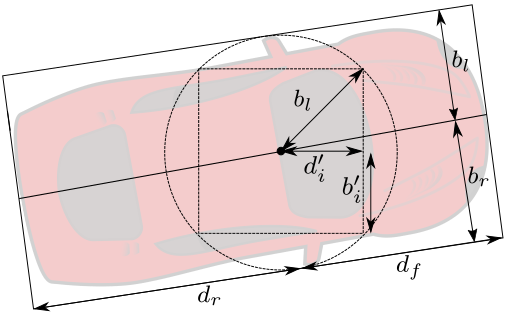}}\\
  \caption{(a) An outer approximation to a vehicle chassis that rotates by $\theta_{\max}$. The distances
  $d'$ and $\bar{d}$
  are used for longitudinal buffers during swerve manoeuvres, and $b'$ is used as a lateral
  buffer.
  (b) An inner approximation to a rotating vehicle chassis.
  }
\end{figure}

The three distances we need for safety analysis are from the centre of mass to the front
of the bounding rectangle, $d'$, from the centre of mass to the
side of the bounding rectangle, $b'$, and from the centre of mass to the rear of
the bounding rectangle, $\bar{d}$. 
The distances from the centre of mass
to the rear and front of the chassis are $d_r$ and $d_f$, respectively. 
The distances to the
left and right of the chassis are $b_l$ and $b_r$, respectively. 
As the vehicle rotates,
the length and width of the bounding rectangle increases until $\theta_{\max}$ reaches the angles 
from the centre of mass to the corners of the rectangle. Further rotation past these points decreases
the dimensions of the bounding rectangle. We can write these angles in terms of $\phi$ and $\gamma$,
illustrated in Figure~\ref{fig:outer_approx}.
The equations for the bounding rectangle distances are then
$d'$, $\bar{d}$, and $b'$ are
\begin{equation} 
\label{eq:d_outer}
d' = 
  \begin{cases} 
    d_f\cos(\theta_{\max}) + b_r\sin(\theta_{\max}) & \theta_{\max} \leq \phi, \\
    \sqrt{d_f^2 + b_r^2} & \theta_{\max} > \phi,
  \end{cases}
\end{equation}
\begin{equation}
\label{eq:d_back_outer}
\bar{d} = 
  \begin{cases} 
    d_r\cos(\theta_{\max}) + b_l\sin(\theta_{\max}) & \theta_{\max} \leq \gamma, \\
    \sqrt{d_r^2 + b_l^2} & \theta_{\max} > \gamma,
  \end{cases}
\end{equation}
\begin{equation}
\label{eq:b_outer}
b' =
  \begin{cases} 
    d_r\sin(\theta_{\max}) + b_r\cos(\theta_{\max}) & \theta_{\max}  \leq \frac{\pi}{2} - \gamma, \\
    \sqrt{d_r^2 + b_r^2} & \theta_{\max}  > \frac{\pi}{2} - \gamma.
  \end{cases}
\end{equation}
We now have an expression for the bounding rectangle distances of a rotating vehicle in terms of $\theta_{\max}$,
which is computed in Section~\ref{sec:method_swerve_for_brake}.

Using $b'$ and the lateral safe distance $d_{\text{lat}}$, we can now compute the lateral clearance distance, $y_c$ required for Subproblem 1.
\begin{equation}
\label{eq:lateral_clearance}
y_c = b' + b_l + d_{\text{lat}}.
\end{equation}
Let us denote the time $y_c$ is attained as $t_c$. 

\begin{theorem}
\Cref{eq:lateral_clearance} gives a lateral clearance distance sufficient for lateral safety when a swerving vehicle becomes laterally adjacent to another braking vehicle, or any time before.
\end{theorem}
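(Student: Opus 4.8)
The plan is to reduce the claim to the RSS lateral safe-distance guarantee of \Cref{eq:rss_lat_dist} by a geometric argument that absorbs the chassis rotation into the axis-aligned bounding rectangle of \Cref{eq:b_outer}, and then to dispose of the two temporal regimes named in the statement separately. First I would fix coordinates with the lateral axis pointing in the direction of the swerve and both centres of mass starting at the same lateral position. Using the bounding-rectangle side distance $b'$ for the rotating (swerving) vehicle and the unrotated half-width $b_l$ for the braking vehicle, I would show that once the swerving vehicle's centre of mass has displaced laterally by $y_c$, the gap between the swerving vehicle's rightmost bounding-rectangle edge (at $y_c - b'$) and the braking vehicle's leftmost edge (at $b_l$) equals $y_c - b' - b_l = d_{\text{lat}}$ by \Cref{eq:lateral_clearance}. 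Because $\theta_{\max}$ upper-bounds the instantaneous yaw throughout the manoeuvre, this rectangle over-approximates the true footprint at every instant, so the true facing-edge gap is at least $d_{\text{lat}}$.

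Next I would invoke the RSS result directly: a lateral separation of $d_{\text{lat}}$ between the rightmost point of the rear vehicle and the leftmost point of the front vehicle is, by construction of \Cref{eq:rss_lat_dist}, sufficient to keep the two vehicles at least $\mu$ apart even when both apply the worst-case lateral acceleration $a_{\max}^{\text{lat}}$ during the reaction time $\rho$ and then decelerate at $a_{\min}^{\text{lat}}$ to zero lateral velocity. Since the bounding-rectangle gap is at least $d_{\text{lat}}$, this guarantee transfers verbatim to the swerving pair, yielding lateral safety at the instant $t_c$ when the clearance distance is attained; note that retaining the full two-sided $d_{\text{lat}}$ is conservative here, as the braking lead vehicle does not itself swerve by Assumption~\ref{swerve_brake_assumption}.

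Finally I would split the relevant time interval into the two cases in the statement. At any time strictly before the vehicles become laterally adjacent, they fail the longitudinal-overlap test of the lateral-adjacency definition, so lateral safety holds immediately from the first clause of the safety definition. At the instant of lateral adjacency, I would argue from the definition of $y_c$ as the earliest clearance point, together with the monotonicity provided by Assumption~\ref{reasonable_manoeuvre_assumption}, that the swerving vehicle has already reached $y_c$; the facing-edge gap is then at least $d_{\text{lat}}$ and the second clause of the safety definition applies.

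The main obstacle I anticipate is this final timing step: establishing that the swerving vehicle has cleared longitudinal adjacency (reached $y_c$) no later than the moment it first becomes laterally adjacent to the braking vehicle, rather than the two events occurring in the unsafe order. This is where the purely geometric reduction meets the longitudinal catch-up kinematics, and it is precisely the coupling that the once-only clearance property of Assumption~\ref{reasonable_manoeuvre_assumption} is meant to control. A secondary subtlety worth checking explicitly is that substituting the worst-case rectangle from \Cref{eq:b_outer} for the true rotating footprint only ever shrinks the reported gap, so that the reduction to \Cref{eq:rss_lat_dist} remains conservative rather than optimistic across the whole interval.
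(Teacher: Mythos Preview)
Your approach matches the paper's: over-approximate the rotating swerving chassis by the axis-aligned bounding rectangle with half-width $b'$, read off that at lateral displacement $y_c$ the facing-edge gap is exactly $d_{\text{lat}}$, invoke the RSS lateral guarantee, and dispose of the pre-adjacency regime trivially via the safety definition. Two remarks are worth making.

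First, the paper is explicit about one point you leave vague: $d_{\text{lat}}$ in \Cref{eq:rss_lat_dist} depends on the initial lateral velocities, and the paper evaluates it with both $v_r^{\text{lat}}$ and $v_f^{\text{lat}}$ set to zero. The braking vehicle has zero lateral velocity by Assumption~\ref{swerve_brake_assumption}, and zero is a conservative lower bound on the swerving vehicle's nonnegative lateral velocity, since that velocity only increases the separation. You gesture at this with your ``full two-sided $d_{\text{lat}}$ is conservative'' comment, but you should pin down the inputs to \Cref{eq:rss_lat_dist} explicitly.

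Second, the timing obstacle you flag---that reaching $y_c$ must precede lateral adjacency---is real, but the paper does not discharge it inside this proof either. It simply asserts that for $t<t_c$ the vehicles are not laterally adjacent and, via Assumption~\ref{reasonable_manoeuvre_assumption}, that the separation is minimal at $t_c$. The burden of ensuring lateral adjacency cannot occur before $t_c$ is deferred to the longitudinal safe-distance computation $d_{s,b}$ in the next theorem, which is constructed precisely so that the rear vehicle cannot catch up longitudinally until $y_c$ is attained. So your concern is well-placed but out of scope for this statement; you can mirror the paper and treat it as an assumption enforced downstream.
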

\begin{proof}
To show lateral safety, we must show that laterally adjacent vehicles are at least $d_{\text{lat}}$
from one another, as given in \Cref{eq:rss_lat_dist}.
Since the swerving vehicle's lateral
speed is variable but nonnegative, a conservative lower bound
on its lateral velocity is zero when computing $d_{\text{lat}}$.
From assumption~\ref{swerve_brake_assumption}, since the other vehicle is braking, it is not swerving, and
therefore has zero lateral velocity during the swerve.
The required $d_{\text{lat}}$ can then be computed using \Cref{eq:rss_lat_dist}, taking $v_{r}^{\text{lat}}$ and $v_{f}^{\text{lat}}$ to be zero, and using the parameters $a^{\text{lat}}_{\min}$, $a^{\text{lat}}_{\max}$, and  $\rho$. The distance $d_{\text{lat}}$ acts as a buffer to ensure that upon reaching lateral adjacency, both agents are laterally safe from one another.

For $t < t_c$, the swerving vehicle is not laterally adjacent to the other vehicle, and
is laterally safe. For $t \geq t_c$, from Assumption~\ref{reasonable_manoeuvre_assumption}, 
$t_c$ is the time at which the two vehicles are closest while laterally adjacent.
From Equation~\ref{eq:lateral_clearance}, there is at least 
$d_{\text{lat}}$ of distance between the vehicles, and thus they are laterally
safe $\forall t \geq t_c$.
\end{proof}

\subsection{Swerving for a Braking Vehicle}
\label{sec:method_swerve_for_brake}

We can now use $y_c$ to compute the longitudinal safe distance, $d_{\textit{s,b}}$, required when swerving 
to avoid a braking lead vehicle. We wish to do
so under the constraints of the bicycle model outlined in Section~\ref{sec:preliminaries}.
In addition, if $\alpha$ denotes the lane width, $t_f$ denotes the end time of the swerve, and the
origin of the coordinate frame is at the center line of the current lane at the rear vehicle's position at $t=0$, we would like the swerve to satisfy the following boundary conditions:
\begin{equation}
\label{eq:bcs}
\theta(t_f) = 0,~y(t_f) = \alpha.
\end{equation}
However, to compute the optimal swerve manoeuvre with respect to longitudinal clearance 
is an optimization problem with no closed form solution~\cite{shiller_sundar_1996}. 
Instead, we can compute a swerve manoeuvre feasible for the bicycle model, 
and use that to obtain an upper bound on the 
actual longitudinal distance required by a swerve constrained by the bicycle model.

As in \Cref{eq:rss_long_dist}, the lead vehicle is travelling with velocity $v_f$, and
then brakes at $a_{\max,\text{brake}}$ during the entire
manoeuvre. The swerve is preceded by the rear vehicle maximally accelerating during
the reaction delay $\rho$, at which point it begins the swerve manoeuvre with initial
speed $v_{r,\rho}$. To ensure monotonicity in the gap between the rear and lead vehicles, 
a lower bound on the distance travelled until $t_f$ by the lead vehicle is used, 
denoted $x_f$.

The swerve we consider is bang-bang in the steering input with zero longitudinal acceleration, and is illustrated
in Figure~\ref{fig:swerve_manoeuvre}.
We denote the longitudinal distance travelled by the swerving vehicle until the swerving vehicle reaches the lateral clearance distance as $x_c$ This distance $x_c$ is computed in Equations~\ref{eq:x_c_1} and~\ref{eq:x_c_2}.

For the swerve manoeuvre, the turning radius of the circular arcs depends on the maximum lateral acceleration, as well as the kinematic limits
of the steering angle. 
The constraints on steering angle and lateral acceleration from (\ref{eq:bicycle_model})
give two constraints on the turning radius
\begin{align}
R_{\min,\delta} = \sqrt{\frac{(l_r + l_f)^2}{\tan(\delta_{\max})^2 + l_r^2}}, &&
R_{\min,a} = \frac{v_{r,\rho}^2}{a^{\text{lat}}_{\min}}.
\end{align}
To ensure both constraints are satisfied, we set $R_c$ from (\ref{eq:bicycle_model}) to the maximum of the two.
From this turning radius, we can compute the steering angle $\delta_c$ and the
slip angle $\beta_c$
\begin{align}
\delta_c = \tan^{-1}\left(\sqrt{\frac{(l_r+l_f)^2}{R_c^2 - l_r^2}}\right), &&
\beta_c = \tan^{-1}\left(\frac{l_r\tan(\delta_c)}{l_r+l_f}\right). 
\end{align} 

\begin{figure}[thpb]
  \centering
  \includegraphics[scale=0.33]{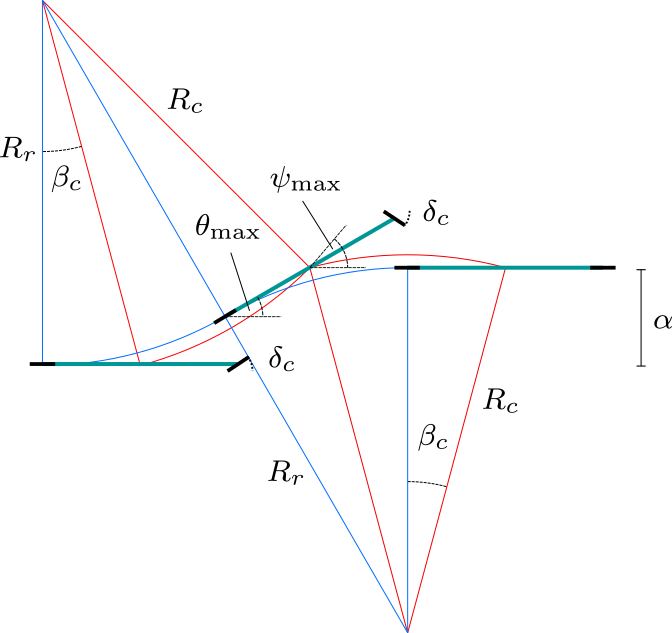}
  \caption{
    \label{fig:swerve_manoeuvre}
    The swerve manoeuvre used for safety analysis. The red path is taken by the
    centre of mass, and the blue path is taken by the rear axle. The distance between
    lanes is $\alpha$, $\delta_c$ is the steering angle, $\beta_c$ is the slip angle. The maximum angles
    achieved by the chassis yaw and the velocity of the centre of mass are given by $\theta_{\max}$ and $\psi_{\max}$,
    respectively. The turning radius of the rear axle and centre of mass's paths are given by
    $R_r$ and $R_c$, respectively.
  }
\end{figure}

We can now compute the $\theta_{\max}$ required to satisfy 
the boundary conditions in ~\Cref{eq:bcs}. From the rear axle, the two circular arcs are 
symmetrical
in lateral distance travelled, as in Figure~\ref{fig:swerve_manoeuvre}. 
Therefore, we can compute the angle along the first circular 
arc required to reach a lateral distance of $\frac{\alpha}{2}$. 
First, we compute the turning radius at the rear axle, $R_r$
\begin{equation}
\label{eq:rear_axle_radius}
R_r = \frac{l_r + l_f}{\tan(\delta_c)}.
\end{equation}
The lateral distance travelled during the first circular arc is then given by
\begin{equation}
\label{eq:y_first_arc}
y(t) = R_r(1 - \cos(\theta(t))).
\end{equation}
For a given value of $\delta_c$, $\theta_{\max}$ is then
\begin{equation}
\label{eq:theta_max}
\theta_{\max} = \cos^{-1}\left(1 - \frac{\alpha}{2R_r}\right).
\end{equation}

To compute $x_c$, 
there are two cases, depending on if $y_c$ is reached in the first or second circular arc.
We can compute $\psi_{\max}$ using (\ref{eq:bicycle_model}).
From Assumption~\ref{forward_motion_assumption}, we have that $\psi_{\max} \leq \frac{\pi}{2}$. 
Thus, the first case occurs if
\begin{equation}
y_c \leq R_c(\cos(\beta_c)-\cos(\psi_{\max})),
\end{equation}
otherwise the second case occurs. 

\subsubsection{First Circular Arc}
Similar to~\Cref{eq:y_first_arc}, the longitudinal position along the first circular arc
is given by
\begin{equation}
\label{eq:x_first_arc}
x(t) = R_c(\sin(\psi(t)) - \sin(\beta_c)).
\end{equation}
We can use the centre of mass equivalent of \Cref{eq:y_first_arc} and $y_c$ to compute the 
$\psi$ value at the clearance point, $\psi_c$ 
\begin{equation}
\psi_c = \cos^{-1}\left(\cos(\beta_c)-\frac{y_c}{R_c}\right).
\end{equation}
Substituting this value for $\psi$ in~\Cref{eq:x_first_arc} gives our swerve longitudinal distance
\begin{equation}
\label{eq:x_c_1}
x_c = R_c(\sin(\psi_c)-\sin(\beta_c)).
\end{equation}
The magnitude of the velocity is constant during the swerve, and so we can compute $t_c$
using the arc length travelled up to the clearance point $y_c$,
\begin{equation}
\label{eq:t_c_1}
t_c = \frac{R_c(\psi_c - \beta_c)}{v}.
\end{equation}

\subsubsection{Second Circular Arc}
In the second circular arc, we denote the initial heading of the centre of mass as  $\hat{\psi} = \psi_{\max} - 2\beta_c$,
the initial $x$ position as $\hat{x} = R_c(\sin(\psi_{\max}) - \sin(\beta_c))$, and the initial $y$ position as $\hat{y} = R_c(\cos(\beta_c)-\cos(\psi_{\max})$.
The longitudinal and lateral distance along this arc are then
\begin{equation}
\label{eq:x_second_arc}
x(t) = R_c(\sin(\hat{\psi}) - \sin(\psi(t))) + \hat{x},
\end{equation}
\begin{equation}
y(t) = R_c(\cos(\psi(t)) - \cos(\hat{\psi})) + \hat{y}.
\end{equation}
As in Case 1, substituting $y_c$ gives us $\psi_c$,
\begin{equation}
\psi_c = \cos^{-1}\left(\frac{1}{R_c}\left(y_c - \hat{y}\right) + \cos(\hat{\psi})\right)
\end{equation}
Substituting this value for $\psi$ in~\Cref{eq:x_second_arc} and add $d'$ gives
\begin{equation}
\label{eq:x_c_2}
x_c = R_c(\sin(\hat{\psi}) - \sin(\psi_c)) + \hat{x} + d'.
\end{equation}
Similar to Case 1, we can then compute the clearance time $t_c$,
\begin{equation}
\label{eq:t_c_2}
t_c = \frac{R_c(\psi_{\max} - \beta_c + \hat{\psi} - \psi_c)}{v}.
\end{equation}

From these longitudinal swerve clearance values, we can then compute the longitudinal
safe distance. To do this, we can replace the
rear braking distance in~\Cref{eq:rss_long_dist} with the longitudinal swerve distance $x_c$. In addition,
to ensure a monotonically decreasing gap between the two vehicles, we set the initial
speed of the lead vehicle (as a conservative approximation) to 
\begin{equation}
\label{eq:lead_vehicle_speed_bound}
v_f' = \min(v_f, v_r\cos(\psi_{\max})).
\end{equation}
The braking distance of the lead vehicle occurs during the reaction time $\rho$ and the swerve
clearance time $t_c$, giving a front vehicle braking distance of
\begin{equation}
x_f = v_f'(\rho+t_c) - \frac{a_{\max,\text{brake}}(\rho+t_c)^2}{2}.
\end{equation}

Using the parameters $a_{\max,\text{accel}}, \rho$ introduced in Section~\ref{sec:preliminaries},
the longitudinal safe distance between a swerving rear vehicle and a braking lead vehicle is 
\begin{equation}
\label{eq:d_long_swerve_for_brake}
d_{\textit{s,b}} = \biggl[ v_r \rho + \frac{1}{2} a_{\max,\text{accel}}\rho^2 + x_c - x_f \biggr]_+ + d' + d.
\end{equation}

\begin{theorem}
\Cref{eq:d_long_swerve_for_brake} gives a longitudinal safe distance
sufficient for safety when swerving for a braking lead vehicle.
\end{theorem}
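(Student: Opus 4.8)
The plan is to track the longitudinal gap between the frontmost point of the swerving rear vehicle and the rearmost point of the braking lead vehicle as a function of time, and to show that this gap stays nonnegative up through the clearance time $\rho + t_c$, after which longitudinal adjacency no longer holds. The expression for $d_{\textit{s,b}}$ in \Cref{eq:d_long_swerve_for_brake} is engineered so that the gap is exactly zero at clearance in the worst case, so the real work is twofold: (i) verifying the displacement bookkeeping that produces each term, and (ii) arguing that the clearance instant is in fact the binding one, i.e. that the gap is monotonically nonincreasing until then.

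First I would account for the displacements. During the reaction window $[0,\rho]$ the rear vehicle applies the most dangerous longitudinal input $a_{\max,\text{accel}}$, covering $v_r\rho + \tfrac12 a_{\max,\text{accel}}\rho^2$ and reaching speed $v_{r,\rho}$, which reproduces the first two terms. Over the ensuing swerve of duration $t_c$ its centre of mass advances longitudinally by exactly $x_c$, computed in \Cref{eq:x_c_1,eq:x_c_2} for the two arc cases, while chassis rotation carries its frontmost point ahead of the centre of mass by the bounding-rectangle overhang $d'$ of \Cref{eq:d_outer}. Meanwhile the lead vehicle brakes throughout $[0,\rho+t_c]$, and by construction $x_f$ is a lower bound on its travelled distance. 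Subtracting the rear advance and the geometric buffers $d'+d$ (which convert the centre-of-mass separation into a separation between the relevant extremal points) from the initial separation $d_{\textit{s,b}}$, and using $[\,\cdot\,]_+ \ge (\cdot)$, then yields a gap at clearance that is $\ge 0$, i.e. the collision-free condition.

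Next I would establish monotonicity so that clearance is the binding instant. The conservative lead speed $v_f' = \min(v_f,\, v_r\cos\psi_{\max})$ in \Cref{eq:lead_vehicle_speed_bound} is the key device: since $v$ is constant over the swerve, $\cos$ is nonincreasing on the relevant range, and $0\le\psi\le\psi_{\max}\le\tfrac{\pi}{2}$ by Assumption~\ref{forward_motion_assumption}, this choice keeps the lead's longitudinal speed below the rear vehicle's instantaneous longitudinal speed $v\cos(\psi+\beta)$, so the gap rate (lead speed minus rear longitudinal speed) is nonpositive and the gap is minimized at $t=\rho+t_c$. For $t<\rho+t_c$ the gap is then no smaller than its clearance value, and for $t\ge\rho+t_c$ the swerving vehicle has reached $y_c$ and, by the definition of the lateral clearance distance together with Assumption~\ref{reasonable_manoeuvre_assumption}, is no longer longitudinally adjacent, so longitudinal safety holds vacuously from that point onward.

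I expect the main obstacle to be the monotonicity argument and the conservative substitutions supporting it, rather than the algebraic bookkeeping. Two points need care. First, using $x_f = v_f'(\rho+t_c) - \tfrac12 a_{\max,\text{brake}}(\rho+t_c)^2$ as a \emph{lower} bound on the lead's distance is legitimate only because this parabola lies below the true, eventually constant, stopping distance once the lead halts; I would verify this sign behaviour explicitly so the bound remains conservative even when $\rho+t_c$ exceeds the lead's stopping time. Second, I must confirm that $v_r\cos\psi_{\max}$ genuinely lower-bounds the rear's longitudinal speed $v\cos(\psi+\beta)$ over both arcs, handling the slip angle $\beta_c$ (whose sign flips on the second arc) and the fact that $v_{r,\rho}\ge v_r$, so that the claimed gap monotonicity actually holds and clearance, rather than some interior time, is the worst case.
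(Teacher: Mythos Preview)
Your proposal is correct and follows essentially the same approach as the paper: split at the clearance time, use the conservative lead-speed substitution $v_f'$ to force a monotonically decreasing gap so that $t_c$ is the binding instant, and then verify the displacement bookkeeping at $t_c$. The two technical caveats you flag (validity of $x_f$ as a lower bound past the lead's stopping time, and the slip-angle handling in the rear-vehicle longitudinal speed) are exactly the places where the paper's own proof is terse, so your more explicit treatment is a refinement rather than a departure.
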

\begin{proof}

For $t > t_c$, $y(t) > y_c$, and therefore the swerving vehicle is no
longer longitudinally adjacent to the lead vehicle, so is safe from the lead vehicle's braking.
For $t \leq t_c$, from~\Cref{eq:lead_vehicle_speed_bound}, we use a conservative lower bound 
for the speed of the lead vehicle to ensure the lead vehicle's speed
is less than the swerving vehicle during the entire swerve. This implies the gap between the 
two vehicles is monotonically decreasing. This means the minimum gap between the two vehicles 
occurs at time $t_c$. 

The swerving vehicle travels $x_c + v_r\rho+\frac{1}{2}a_{\max,\text{accel}}\rho^2$,
and a conservative lower bound on the lead vehicle's travel distance is 
$v_f'(\rho+t_c) - \frac{1}{2}a_{\max,\text{brake}}(\rho+t_c)^2$. There is at most $d'$ of distance from
the centre of mass to the front of the swerving vehicle. 
Thus, if a swerving vehicle
maintains distance $d_{\textit{s,b}}$, it is safe from the lead vehicle at time $t_c$.
Since the gap is monotonically decreasing for $t \leq t_c$, it is safe
$\forall t \leq t_c$.

\end{proof}

\subsection{Braking for a Swerving Vehicle}
\label{sec:method_braking_for_swerving_vehicle}

The longitudinal safe distance required to swerve for a braking vehicle was computed in the preceding section, 
and this section considers the opposite problem, computing the longitudinal safe distance required to brake 
for a swerving lead vehicle without collision. Since the lead vehicle intends
to occupy the other lane, it requires less longitudinal distance for the rear vehicle to brake
to avoid the swerving lead vehicle than it would for it to brake for a braking lead vehicle. 
It is assumed the front vehicle is performing the same swerve discussed in Section~\ref{sec:method_swerve_for_brake}.
To account for rotation of the front vehicle, $\bar{d}$ is used to compensate as defined in Section~\ref{sec:method_lat_clear}.

Equations~\ref{eq:x_c_1}, \ref{eq:t_c_1}, \ref{eq:x_c_1}, and \ref{eq:t_c_2} can be used to compute
the $x_c$ and $t_c$ for the front vehicle's swerve.
As in Equation~\ref{eq:rss_long_dist}, it is assumed that the rear vehicle accelerates maximally during its 
reaction time, and then brakes comfortably until $t_c$. 
As before, denote the rear vehicle's post-acceleration velocity
as $v_{r,\rho}$. Then its minimum velocity during the braking manoeuvre is
\begin{equation}
v_{r,\min} = \max(\min(v_r, v_{r,\rho} - a_{\min,\text{brake}}(t_c-\rho)), 0).
\end{equation}
As in Section~\ref{sec:method_swerve_for_brake}, the proof of safety is simplified if the gap 
is monotonically decreasing until lateral safety is reached. To ensure this, the lead vehicle speed is 
conservatively bounded with $v_f'$
\begin{equation}
\label{eq:v2_swerve_lower_bound}
v_f' = \min(v_f\cos(\psi_{\max}), v_{r,\min}).
\end{equation}
A conservative lower bound for the longitudinal distance travelled by the swerving front vehicle is then
\begin{equation}
\label{eq:x_f_b_s}
x_f = v_f't_c.
\end{equation}
The distance $x_f$ is a lower bound on the distance travelled by the front vehicle during the 
swerve that creates a monotonically decreasing gap. 

The distance travelled by the rear braking vehicle during its reactions delay and its braking manoeuvre is denoted by $x_r$.
This distance depends on the clearance time $t_c$, similar to 
the distance travelled by the front vehicle in the preceding section. The distance travelled
during the rear vehicle's braking manoeuvre, $x_{r,\text{brake}}$, is given by 
\begin{equation}
x_{r,\text{brake}} = 
  \begin{cases} 
    v_{r,\rho}(t_c - \rho) - \frac{a_{\min,\text{brake}}(t_c - \rho)^2}{2}, & t_c - \rho \leq \frac{v_{r,\rho}}{a_{\min,\text{brake}}},\\
    \frac{v_{r,\rho}^2}{2a_{\min,\text{brake}}}, & t_c - \rho > \frac{v_{r,\rho}}{a_{\min,\text{brake}}}.
  \end{cases}
\end{equation}

Following this, the distance travelled by the braking rear vehicle is
\begin{equation}
\label{eq:x_r_b_s}
x_r = \frac{(v_r + v_{r,\rho})\rho}{2} + x_{r,\text{brake}}.
\end{equation}

Using Equations \ref{eq:x_f_b_s} and \ref{eq:x_r_b_s}, the longitudinal safe distance 
when braking for a swerving vehicle, $d_{\textit{b,s}}$ is then
\begin{equation}
\label{eq:d_long_brake_for_swerve}
d_{\textit{b,s}} = \left[x_r - x_f\right]_+ + d_f + \bar{d}.
\end{equation}

\begin{theorem}
Equation~\ref{eq:d_long_brake_for_swerve} gives a longitudinal safe distance
sufficient for safety when braking for a swerving lead vehicle.
\end{theorem}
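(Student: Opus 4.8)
The plan is to mirror the structure of the proof for swerving behind a braking lead vehicle in Section~\ref{sec:method_swerve_for_brake}, but with the roles of the two vehicles exchanged: here the \emph{front} vehicle executes that same swerve while the \emph{rear} vehicle brakes. I would first split the analysis at the clearance time $t_c$. For $t > t_c$ the front vehicle has reached the lateral clearance distance $y_c$, so by the definition of $y_c$ it is no longer longitudinally adjacent to the rear vehicle, and by Assumption~\ref{reasonable_manoeuvre_assumption} it does not return; the two vehicles are therefore longitudinally safe. The entire burden of the proof thus falls on the interval $t \leq t_c$, where I would reduce the claim to showing that the \emph{minimum} longitudinal gap over $[0,t_c]$ is nonnegative.

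The central step is to establish that the gap is monotonically non-increasing on $[0,t_c]$, so that this minimum is attained exactly at $t_c$. Here I would exploit the dual role played by the conservative lead speed $v_f'$ in \Cref{eq:v2_swerve_lower_bound}. On one hand, $v_f' \le v_f\cos(\psi_{\max})$, and since $\cos$ is minimized over the swerve at $\psi_{\max}$ (using $0 \le \psi \le \psi_{\max} \le \pi/2$ from Assumption~\ref{forward_motion_assumption}), the quantity $x_f = v_f' t_c$ of \Cref{eq:x_f_b_s} is a genuine lower bound on the actual longitudinal displacement of the swerving front vehicle up to $t_c$. On the other hand, $v_f' \le v_{r,\min}$, the minimum speed of the braking rear vehicle on $[0,t_c]$, so the rear vehicle's longitudinal speed never drops below the constant, conservatively modelled front speed; hence the rear vehicle is always closing and the gap is non-increasing. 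Reconciling these two requirements in a single constant is the crux of the argument, and I expect it to be the main obstacle: the two bounds pull in opposite directions (under-estimating the front displacement versus guaranteeing the rear never falls behind), and one must check that the minimum in \Cref{eq:v2_swerve_lower_bound} secures both simultaneously.

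With monotonicity in hand, I would evaluate the closing distance at $t_c$. The braking rear vehicle advances by $x_r$ as given in \Cref{eq:x_r_b_s}, whose piecewise term $x_{r,\text{brake}}$ I would confirm correctly caps the travel once the rear vehicle comes to rest before $t_c$; the front vehicle advances by at least $x_f$. Since the rear vehicle does not rotate, the distance from its centre of mass to its frontmost point is simply $d_f$, while the front vehicle rotates by at most $\theta_{\max}$, so the distance from its centre of mass to the rearmost point of its axis-aligned bounding rectangle is at most $\bar{d}$ from \Cref{eq:d_back_outer}, a conservative rear extent throughout $[0,t_c]$. Interpreting the maintained separation $d_{\textit{b,s}}$ as a centre-to-centre longitudinal distance, collision is avoided at $t_c$ precisely when $d_{\textit{b,s}} - x_r + x_f \ge d_f + \bar{d}$, and the choice $d_{\textit{b,s}} = [x_r - x_f]_+ + d_f + \bar{d}$ in \Cref{eq:d_long_brake_for_swerve} makes this hold (the positive-part operator only adding slack when the gap is already non-closing). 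Combining the two time regimes then yields safety for all $t$, completing the proof.
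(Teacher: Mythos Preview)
Your proposal is correct and follows essentially the same approach as the paper: split at $t_c$, use the conservative lead speed $v_f'$ from \Cref{eq:v2_swerve_lower_bound} both to lower-bound the front displacement and to guarantee the rear vehicle is always closing so the gap is monotone, then evaluate the gap at $t_c$ with the chassis buffers $d_f$ and $\bar{d}$. Your discussion of the dual role of $v_f'$ and the handling of the positive part and the piecewise $x_{r,\text{brake}}$ is in fact more explicit than the paper's own proof.
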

\begin{proof}
For $t > t_c$, the swerving vehicle is laterally clear from the rear braking vehicle, and therefore
the rear vehicle is safe. The velocity used for the lead vehicle is a conservative lower bound
on its true speed $\forall t \leq t_c$, as per Equation~\ref{eq:v2_swerve_lower_bound}. 
In addition, $v_f' \leq v_r$, $\forall t \leq t_c$,
and as a result the gap between the two vehicles is monotonically
decreasing on that interval. The minimum distance between the two vehicles thus occurs
at time $t_c$. Equation~\ref{eq:d_long_brake_for_swerve} thus gives enough
clearance such that no collision occurs at time $t_c$, so the rear vehicle is safe at time $t_c$. Since the
gap is monotonically decreasing over the interval, the rear vehicle is safe $\forall t \leq t_c$.
\end{proof}

\subsection{Swerving for a Swerving vehicle}
\label{sec:method_swerve_for_swerve}

The final relevant longitudinal safe distance is the distance required when swerving behind a swerving
lead vehicle. This is illustrated in Figure~\ref{fig:swerve_for_swerve_example}. Both vehicles are longitudinally
adjacent during the entire manoeuvre. From Assumption~\ref{swerve_brake_assumption}, the lead vehicle will
not brake during its swerve. The goal is then to compute the longitudinal distance required to swerve
behind a lead swerving vehicle, such that if the lead vehicle were to immediately brake with deceleration 
$a_{\max, \text{accel}}$ at the end of its swerve, and the rear vehicle were to brake with deceleration
$a_{\min, \text{accel}}$ at the end of its reaction-delayed swerve, there would be no collision.
The swerve completion times of the rear and front vehicle are given by $t_1$ and $t_2$, respectively.
Similar to the previous section, $v_f'$ denotes a conservative lower bound on the front vehicle's speed.
The longitudinal safe distance required to swerve in response to a swerving vehicle, $d_{\textit{s,s}}$, is then
\begin{multline}
\label{eq:d_long_swerve_swerve}
d_{\textit{s,s}} = \frac{v_r + v_{r,\rho}}{2}\rho + v_{r,\rho}(t_1 - \rho) + \\ \frac{v_{r,\rho}^2}{2a_{\min,\text{brake}}} - \left(v_f't_2 + \frac{v_f'^2}{2a_{\max,\text{brake}}}\right) + d' + \bar{d}. 
\end{multline}

\begin{theorem}
Equation~\ref{eq:d_long_swerve_swerve} gives a longitudinal safe distance sufficient for 
safety when swerving for a swerving lead vehicle.
\end{theorem}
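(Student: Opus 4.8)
The plan is to follow the same template as the two preceding theorems, but without any recourse to lateral clearance: since both vehicles swerve into the adjacent lane and remain longitudinally adjacent for the entire manoeuvre (by the scenario setup together with Assumption~\ref{reasonable_manoeuvre_assumption}), I must establish longitudinal safety for every $t \ge 0$ up to the instant both vehicles have come to rest, rather than only up to a clearance time. First I would fix the worst-case control inputs consistent with Assumption~\ref{swerve_brake_assumption}: the rear vehicle accelerates at $a_{\max,\text{accel}}$ throughout its reaction delay $\rho$ (reaching $v_{r,\rho}$), swerves with zero longitudinal acceleration until $t_1$, and then brakes at the comfortable rate $a_{\min,\text{brake}}$; the lead vehicle, being committed to its swerve, does not brake during it, swerves until $t_2$, and only then brakes maximally at $a_{\max,\text{brake}}$.

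The next step is to bound the forward progress of each vehicle so that the three positive terms and the bracketed subtracted term of \Cref{eq:d_long_swerve_swerve} are recovered. For the rear vehicle I would over-approximate its longitudinal displacement by treating the swerve segment as straight travel at the constant speed $v_{r,\rho}$ (legitimate since the longitudinal speed $v_{r,\rho}\cos(\psi+\beta)$ never exceeds $v_{r,\rho}$), yielding the reaction term $\tfrac{v_r+v_{r,\rho}}{2}\rho$, the swerve term $v_{r,\rho}(t_1-\rho)$, and the braking term $\tfrac{v_{r,\rho}^2}{2a_{\min,\text{brake}}}$. For the lead vehicle I would under-approximate its longitudinal displacement using the conservative lower bound $v_f'$ on its longitudinal speed during the swerve, giving $v_f' t_2$, plus its maximal braking distance $\tfrac{v_f'^2}{2a_{\max,\text{brake}}}$. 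Because each approximation over-states how far the rear vehicle advances and under-states how far the lead vehicle advances, the resulting \emph{modelled} centre-of-mass gap is a lower bound on the true centre-of-mass gap at every instant.

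I would then argue that this modelled gap is monotonically nonincreasing in $t$ until both vehicles are at rest. As in the swerve-for-brake and brake-for-swerve proofs, the key is that $v_f'$ is chosen precisely so that the modelled lead speed never exceeds the modelled rear speed; combined with the forward-motion bound $\psi_{\max}\le\tfrac{\pi}{2}$ from Assumption~\ref{forward_motion_assumption}, this keeps the modelled relative longitudinal velocity nonpositive throughout. Consequently the modelled gap attains its minimum once both vehicles have stopped, so it suffices to verify safety at that single terminal configuration.

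Finally, at the terminal (stopped) configuration the separation required between the two centres of mass to keep the rotated chassis footprints disjoint is $d'+\bar{d}$, the front extent of the rear vehicle's bounding rectangle plus the rear extent of the lead vehicle's, as in \Cref{eq:d_outer,eq:d_back_outer} evaluated at $\theta_{\max}$ (the worst-case extents over the swerve). Setting the initial centre-of-mass gap equal to $d_{\textit{s,s}}$ makes the modelled terminal gap exactly $d'+\bar{d}$, so the footprints just touch and no collision occurs; since the true gap dominates the modelled gap and the modelled gap is monotone, the vehicles are safe for all $t$. I expect the monotonicity step to be the main obstacle: unlike the earlier scenarios there is no clearance time at which to terminate the analysis, and because the two swerve-completion times $t_1$ and $t_2$ generally differ, one vehicle may be braking while the other is still swerving, so I must confirm that the conservative speed bounds keep the relative longitudinal velocity correctly signed across each of these phase transitions.
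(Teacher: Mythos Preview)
Your proposal is correct and follows essentially the same approach as the paper: construct a conservative lower bound $\hat g(t)\le g(t)$ on the centre-of-mass gap using $x_{s,1}(t)\le v_{r,\rho}t$ and $x_{s,2}(t)\ge v_f' t$, argue $\hat g$ is monotonically nonincreasing so its minimum is at the terminal (both-stopped) state, and then read off \Cref{eq:d_long_swerve_swerve} after adding the rotation buffers $d'+\bar d$. The one ingredient the paper supplies that you only flag as an obstacle is an explicit ordering of the phase boundaries, $\rho<t_2<\rho+t_1<t_2+t_{b,2}<\rho+t_1+t_{b,1}$, justified by the observation that bicycle-model swerve durations scale like $v\cos^{-1}(1-1/v^2)$ and are therefore quasi-constant across road speeds (together with $a_{\max,\text{brake}}>a_{\min,\text{brake}}$ and swerve times exceeding reasonable $\rho$); with this ordering in hand the sign check on the relative modelled speed across each interval is straightforward, so your anticipated difficulty is resolved by assumption rather than by a delicate case analysis.
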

\begin{proof}
The gap between each vehicle can be written as a piecewise function of time. The endpoints of the intervals
are functions of the reaction delay, $\rho$, the duration of the front vehicle's swerve, $t_2$,
the duration of the rear vehicle's swerve, $t_1$, the brake time of the front vehicle, $t_{b,2}$,
and the brake time of the rear vehicle, $t_{b,1}$. The swerve times for the kinematic bicycle model for varying speeds 
are proportional to $v\cos^{-1}\left(1 - \frac{1}{v^2}\right)$, which is quasi-constant across all relevant 
road speeds. In addition, $a_{\max, \text{accel}} > a_{\min, \text{accel}}$, and swerve times are longer than
reasonable reaction times. From this, it is reasonable to assume that the interval endpoints are
$\rho < t_2 < \rho + t_1 < t_2 + t_{b,2} < \rho + t_1 + t_{b,1}$.
Denote the longitudinal distance travelled during the swerves by the front and rear vehicle as
$x_{s,2}(t)$ and $x_{s,1}(t)$ respectively, the initial gap between the vehicles by $g_0$, and the gap between the vehicles as $g(t)$.

%\begin{strip}
%\begin{equation} 
%\label{eq:g_swerve_swerve}
%g(t) = 
%  \begin{cases} 
%     g_0 + x_{s,2}(t) - (v_rt + \frac{1}{2}a_{\max,\text{accel}}t^2) & t \leq \rho, \\
%     g_0 + x_{s,2}(t) - (\frac{v_r + v_{r,\rho}}{2}\rho + x_{s,1}(t - \rho)) & \rho < t \leq t_2, \\
%     g_0 + x_{s,2}(t_2) + v_f(t - t_2) - \frac{1}{2}a_{\max,\text{brake}}(t-t_2)^2 - \\
%     ~~~~(\frac{v_r + v_{r,\rho}}{2}\rho + x_{s,1}(t - \rho)) & t_2 < t \leq \rho + t_1, \\
%     g_0 + x_{s,2}(t_2) + v_f(t - t_2) - \frac{1}{2}a_{\max,\text{brake}}(t-t_2)^2 - \\
%     ~~~~(\frac{v_r + v_{r,\rho}}{2}\rho + x_{s,1}(t_1 - \rho) + v_{r,\rho}(t - t_1) - \frac{1}{2}a_{\min,\text{brake}}(t-t_1)^2)  & \rho + t_1 < t \leq t_2 + t_{b,2}, \\
%     g_0 + x_{s,2}(t_2) + \frac{v_f^2}{2a_{\max,\text{brake}}} - (\frac{v_r + v_{r,\rho}}{2}\rho + x_{s,1}(t_1 - \rho) + \\
%     ~~~~v_{r,\rho}(t - t_1) - \frac{1}{2}a_{\min,\text{brake}}(t-t_1)^2)  & t_2 + t_{b,2} < t \leq \rho + t_1 + t_{b,1}, \\
%     g_0 + x_{s,2}(t_2) + \frac{v_f^2}{2a_{\max,\text{brake}}} - (\frac{v_r + v_{r,\rho}}{2}\rho + x_{s,1}(t_1 - \rho) + \frac{v_{r,\rho}^2}{2a_{\min,\text{brake}}}) & t > \rho + t_1 + t_{b,1}.
%  \end{cases}
%\end{equation}
%\end{strip}

The maximum longitudinal velocity during the rear vehicle swerve is $v_{r,\rho}$. If the maximum
$\psi$ value during the front vehicles swerve is denoted $\psi_{\max,f}$, the minimum
longitudinal velocity during the front vehicle's swerve is given by $v_f\cos(\psi_{\max,f})$.
Set $v_f' = \min(v_f\cos(\psi_{\max,f}), v_r)$.
This means that
\begin{equation}
\label{eq:x_s_1}
x_{s,1}(t) \leq v_{r,\rho} t,
\end{equation}
\begin{equation}
\label{eq:x_s_2}
x_{s,2}(t) \geq v_f' t.
\end{equation}

Using \Cref{eq:x_s_1,eq:x_s_2} as conservative bounds on the distance travelled by both vehicles during the swerve manoeuvre results in a monotonically decreasing function of $t$, $\hat{g}(t)$,
with the property that $\hat{g}(t) \leq g(t), \forall t$.
%\begin{strip}
%\begin{equation} 
%\label{eq:g_hat_swerve_swerve}
%\hat{g}(t) = 
%  \begin{cases} 
%     g_0 + v_f't - (v_rt + \frac{1}{2}a_{\max,\text{accel}}t^2) & t \leq \rho, \\
%     g_0 + v_f't - (\frac{v_r + v_{r,\rho}}{2}\rho + v_{r,\rho}(t - \rho)) & \rho < t \leq t_2, \\
%     g_0 + v_f't - \frac{1}{2}a_{\max,\text{brake}}(t-t_2)^2 - (\frac{v_r + v_{r,\rho}}{2}\rho + v_{r,\rho}(t - \rho)) & t_2 < t \leq \rho + t_1, \\
%     g_0 + v_f't - \frac{1}{2}a_{\max,\text{brake}}(t-t_2)^2 - \\
%     ~~~~(\frac{v_r + v_{r,\rho}}{2}\rho + v_{r,\rho}(t - \rho) - \frac{1}{2}a_{\min,\text{brake}}(t-t_1)^2)  & \rho + t_1 < t \leq t_2 + t_{b,2}, \\
%     g_0 + v_f't_2 + \frac{v_f'^2}{2a_{\max,\text{brake}}} - (\frac{v_r + v_{r,\rho}}{2}\rho + v_{r,\rho}(t - \rho) - \\
%     ~~~~\frac{1}{2}a_{\min,\text{brake}}(t - t_1)^2)  & t_2 + t_{b,2} < t \leq \rho + t_1 + t_{b,1}, \\
%     g_0 + v_f't_2 + \frac{v_f'^2}{2a_{\max,\text{brake}}} - (\frac{v_r + v_{r,\rho}}{2}\rho + v_{r,\rho}(t_1 - \rho) + \frac{v_{r,\rho}^2}{2a_{\min,\text{brake}}})  & t > \rho + t_1 + t_{b,1}.
%  \end{cases}
%\end{equation}
%\end{strip}

This implies that the minimum of $\hat{g}(t)$ occurs for $t > t_{b,1}$, where $\hat{g}(t)$ is constant
\begin{multline}
\min_t \hat{g}(t) = g_0 + v_f't_2 + \frac{v_f'^2}{2a_{\max,\text{brake}}} -\\ 
~~~~\left(\frac{v_r + v_{r,\rho}}{2}\rho +
v_{r,\rho}t_1 + \frac{v_{r,\rho}^2}{2a_{\min,\text{brake}}}\right).
\end{multline}
Since $\hat{g}(t) \leq g(t), \forall t$, if $\hat{g}(t) \geq 0, \forall t$, no collision occurs.
This is satisfied if the initial gap satisfies
\begin{multline}
g_0 \geq \frac{v_r + v_{r,\rho}}{2}\rho + v_{r,\rho}t_1 + \frac{v_{r,\rho}^2}{2a_{\min,\text{brake}}} - \\
~~~~\left(v_f't_2 + \frac{v_f'^2}{2a_{\max,\text{brake}}}\right).
\end{multline}
By adding in the distances from the centre of mass to the ends of the chassis, compensating for the rotation
of each swerving vehicle, an initial gap is sufficient for safety $\forall t$ if
\begin{multline}
g_0 \geq \frac{v_r + v_{r,\rho}}{2}\rho + v_{r,\rho}t_1 + \frac{v_{r,\rho}^2}{2a_{\min,\text{brake}}} - \\
~~~~\left(v_f't_2 + \frac{v_f'^2}{2a_{\max,\text{brake}}}\right) + d' + \bar{d}.
\end{multline}
Which yields Equation~\ref{eq:d_long_swerve_swerve}.

At $t \geq t_2$, the time at which the lead vehicle begins hard braking, there is enough
longitudinal distance to brake for the leading vehicle, as $\hat{g}(t) \geq 0, \forall t \geq t_2$,
so the rear vehicle is safe. Since $\hat{g}(t)$ 
is monotonically decreasing with respect to $t$, the safe longitudinal distance is satisfied for 
$t < t_2$, and thus the rear vehicle is safe $\forall t$.
\end{proof}

\subsection{Universal Following Distance}
\label{sec:method_universal_following_distance}

The final subproblem addressed in this paper aims to combine the results of the
previous sections into a final following distance that can be maintained by all
vehicles in a given straight road system to ensure universal safety, assuming the vehicles
can brake or swerve as a response to the behaviour of other vehicles in front of them.
In this sense, this section extends the analysis of the preceding sections into the case
of more than two vehicles in a road system.
Each vehicle's following distance will be a function of the speed
of the vehicle, as well as the speed of the 2 vehicles in front of the vehicle, and the parameters
outlined in Section~\ref{sec:preliminaries}. 
Denote the distance required to brake for a braking lead vehicle as $d_{\textit{b,b}}(v_{r}, v_{f}, \rho)$,
the distance required to swerve for a braking lead vehicle as $d_{\textit{b,s}}(v_{r}, v_{f}, \rho)$,
the distance required to swerve for a braking lead vehicle as $d_{\textit{s,b}}(v_{r}, v_{f}, \rho)$,
and the distance required to swerve for a swerving lead vehicle as $d_{\textit{s,s}}(v_{r}, v_{f}, \rho)$.

\begin{figure}[thpb]
  \centering
  \subfloat[\label{fig:utopia_swerve}]{\includegraphics[scale=0.4]{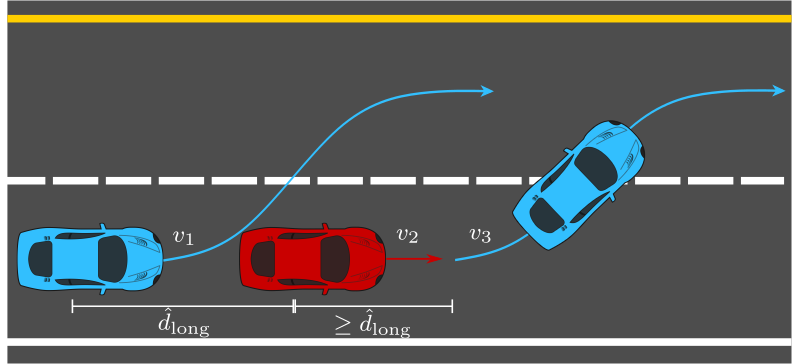}}\qquad
  \subfloat[\label{fig:utopia_brake}]{\includegraphics[scale=0.4]{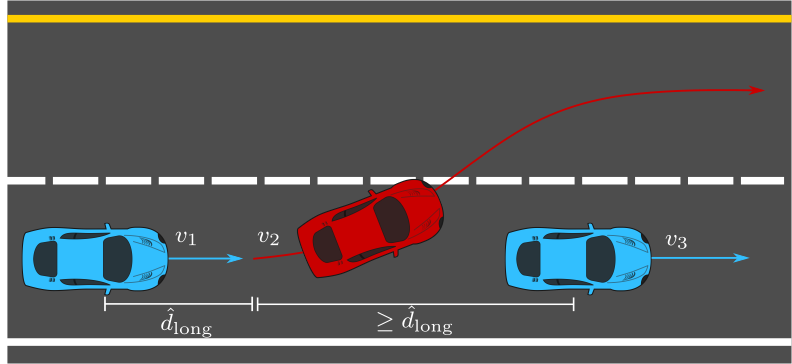}}\qquad
  \caption{
    \label{fig:utopia_brake_swerve}
    (a) Scenario where the rear vehicle must swerve for a swerving vehicle 2 cars ahead.
    (b) Scenario where the rear vehicle must brake for a braking vehicle 2 cars ahead.
  }
\end{figure}

In such a road system, there will be blocks of vehicles where the front vehicle in the block is much farther away from the nearest vehicle in front of it than both $d_{\textit{b,b}}$ 
and $d_{\textit{s,s}}$. Since it is at least this far, 
it can safely brake or swerve for any vehicle in front of it, and therefore any vehicle in front of it can be ignored.
Because of this, these blocks can be considered in isolation, and if each
block of vehicles is considered safe, then all vehicles in the road system are considered safe.
For any vehicle in a given block, denote its speed by $v_1$, and the speeds of the first and second
vehicles in front of it (if they exist within the block) as $v_2$ and $v_3$, respectively.
The longitudinal position of each vehicle as a function of time is denoted by $x_1(t)$, $x_2(t)$, and $x_3(t)$.
A sufficient safe following distance for each vehicle is then
\begin{multline}
\label{eq:d_long_universal}
\hat{d}_{\text{long}} = \max(d_{\textit{b,s}}(v_{1}, v_{2}, \rho), d_{\textit{b,s}}(v_{1}, v_{2}, \rho), \\
d_{\textit{s,s}}(v_{1}, v_{3}, 2\rho) - d_{\textit{s,b}}(v_{2}, v_{3}, \rho), \\
d_{\textit{b,b}}(v_{1}, v_{3}, 2\rho) - d_{\textit{s,b}}(v_{2}, v_{3}, \rho)).
\end{multline}

\begin{theorem}
Equation~\ref{eq:d_long_universal} gives a longitudinal safe distance sufficient for 
universal safety when maintained by all vehicles.
\end{theorem}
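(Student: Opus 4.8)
The plan is to build on the block decomposition already justified in the preceding text: the road partitions into blocks whose lead vehicle is farther than both $d_{\textit{b,b}}$ and $d_{\textit{s,s}}$ from anything ahead of it, so that lead vehicle can be analysed in isolation, and safety of every block implies safety of the whole system. It therefore suffices to show that, inside a single block, if every vehicle maintains $\hat{d}_{\text{long}}$ from its immediate predecessor, then every vehicle retains a safe (brake or swerve) response to whatever the vehicles ahead of it do. Since $\hat{d}_{\text{long}}$ depends only on $v_1$, $v_2$, $v_3$, I would fix an arbitrary vehicle (labelled $1$) together with the two vehicles immediately ahead of it ($2$ and $3$) and reduce everything to a case analysis over the behaviours of $2$ and $3$, invoking the four pairwise safe-distance guarantees of the earlier theorems as building blocks.

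First I would dispatch the one-vehicle-ahead cases. By Assumption~\ref{swerve_brake_assumption} vehicle $2$ either brakes or swerves but not both. If vehicle $2$ brakes, vehicle $1$ can respond by braking ($d_{\textit{b,b}}$, the RSS distance) or swerving ($d_{\textit{s,b}}$, Theorem~2); if vehicle $2$ swerves, vehicle $1$ can respond by braking ($d_{\textit{b,s}}$, Theorem~3) or swerving ($d_{\textit{s,s}}$, Theorem~4). Because vehicle $1$ is free to pick its own response for each behaviour of vehicle $2$, the gap it must hold against vehicle $2$ is dominated by the direct-ahead terms $d_{\textit{b,s}}(v_1,v_2,\rho)$ and its companion in the $\max$ defining $\hat{d}_{\text{long}}$; I would verify these dominate the required quantity so that the gap $x_2(t)-x_1(t)$ stays safe by the cited theorems.

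The substantive part is the two-vehicles-ahead cases, depicted in Figure~\ref{fig:utopia_brake_swerve}, which force the appearance of the reaction time $2\rho$ and of the subtracted term $d_{\textit{s,b}}(v_2,v_3,\rho)$. Here vehicle $3$ acts first, vehicle $2$ reacts after a delay $\rho$ and clears the lane, exposing vehicle $3$ to vehicle $1$, which can only begin its own mitigating manoeuvre after a further delay, giving an effective reaction time of $2\rho$ against vehicle $3$. The key identity is the telescoping of longitudinal gaps, $x_3(t)-x_1(t) = \bigl(x_3(t)-x_2(t)\bigr) + \bigl(x_2(t)-x_1(t)\bigr)$, with the chassis-length constants already absorbed into $d'$ and $\bar{d}$. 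In the worst case where vehicle $3$ swerves and vehicle $1$ must ultimately swerve for it, Theorem~4 requires $x_3-x_1 \ge d_{\textit{s,s}}(v_1,v_3,2\rho)$; since vehicle $2$ is guaranteed to hold at least $d_{\textit{s,b}}(v_2,v_3,\rho)$ ahead of vehicle $3$, enforcing $x_2-x_1 \ge d_{\textit{s,s}}(v_1,v_3,2\rho)-d_{\textit{s,b}}(v_2,v_3,\rho)$ makes the telescoped sum at least $d_{\textit{s,s}}(v_1,v_3,2\rho)$. The analogous braking chain, where vehicle $3$ brakes and vehicle $1$ brakes for it, yields the $d_{\textit{b,b}}(v_1,v_3,2\rho)-d_{\textit{s,b}}(v_2,v_3,\rho)$ term. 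Taking the $\max$ over the one- and two-ahead worst cases then bounds every scenario simultaneously.

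I expect the main obstacle to be rigorously justifying this two-ahead composition: namely that $d_{\textit{s,b}}(v_2,v_3,\rho)$ is a genuine lower bound on the $2$--$3$ gap throughout the relevant manoeuvre (not merely at its initial or terminal instant) in both the swerving and braking chains, that the effective reaction time is exactly $2\rho$ once one accounts for when vehicle $1$ can first perceive vehicle $3$, and that $d_{\textit{s,s}}$ and $d_{\textit{b,b}}$ remain valid safe distances when their reaction-time argument is set to $2\rho$. I would also confirm that no mixed scenario omitted from the enumeration --- for instance vehicle $3$ swerving while vehicle $2$ brakes, or vehicle $2$ swerving away while vehicle $3$ brakes behind it --- produces a requirement exceeding the four enumerated terms. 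The monotonicity of each pairwise gap on $[0,t_c]$ established in the earlier proofs is the tool I would use to reduce each such scenario to its closest-approach instant and hence to one of the already-bounded cases, completing the argument that maintaining $\hat{d}_{\text{long}}$ guarantees universal safety.
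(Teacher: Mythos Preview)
Your proposal follows essentially the same route as the paper: block decomposition, then a per-vehicle case analysis on the actions of the two predecessors, using the pairwise theorems together with the telescoped gap $x_3-x_1=(x_3-x_2)+(x_2-x_1)$ to justify the subtracted $d_{\textit{s,b}}(v_2,v_3,\rho)$ term and the doubled reaction time $2\rho$. The paper frames this explicitly as induction on the vehicles in a block (base case: the first two vehicles; inductive step: assume vehicle $2$ is already safe from vehicle $3$ and show vehicle $1$ is safe), which is exactly your ``fix an arbitrary vehicle $1$ with predecessors $2$ and $3$'' once the inductive hypothesis is made explicit.

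The one place where you leave things open but the paper does not is the handling of the mixed scenarios you flag at the end. The paper dispatches these directly from the inductive hypothesis: since vehicle $2$ is assumed safe from vehicle $3$, one has $x_2(t)\le x_3(t)$ for all $t$; hence if vehicle $1$'s swerve (respectively brake) already clears vehicle $2$, it automatically clears vehicle $3$ whenever vehicle $3$ is performing the \emph{same} type of manoeuvre as vehicle $2$. This is why only the two ``opposite'' chains (vehicle $3$ swerving while vehicle $2$ brakes, vehicle $3$ braking while vehicle $2$ swerves) contribute nontrivial two-ahead terms in \Cref{eq:d_long_universal}. Making the induction explicit, and using $x_2(t)\le x_3(t)$ rather than monotonicity of individual gaps, would close the obstacle you identified.
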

\begin{proof}
As mentioned earlier, each block of vehicles can be analyzed individually for safety, and if every block
is safe, all vehicles are safe. The safety of any given block can be proved using an inductive argument
across all of the vehicles, starting from the front of the block. 
The following is a proof sketch.
\begin{itemize}
	\item For the base case, the safety of the first two vehicles is proven
	when following with at least $\hat{d}_{\text{long}}$.

	\item For the inductive step, it is assumed the $i^{\text{th}}$ agent is following with
	at least $\hat{d}_{\text{long}}$ and is safe, and it is shown that if the 
  $(i+1)^{\text{th}}$ agent follows with at least $\hat{d}_{\text{long}}$, then it is safe.
\end{itemize}

\subsubsection{Base Case}
The first vehicle at the front of the block is by definition at least $d_{\textit{b,b}}$ and $d_{\textit{s,s}}$ 
from any vehicle in front of it (if such a vehicle exists). 
As a result, any potential vehicle in front of the first can be safely avoided
if necessary with either a brake or a swerve. This means that the first vehicle
in the block is safe, and any potential vehicle in front of the first
can be safely ignored by all vehicles in the block.

The second vehicle follows the first vehicle at 
$\hat{d}_{\text{long}}$. If the front vehicle brakes, the second vehicle is at least $d_{\textit{s,b}}$
away from it, and can swerve to safety. If the front vehicle swerves, the second vehicle is
at least $d_{\textit{b,s}}$ away from it, and can brake safely. The second vehicle will therefore not collide with the first vehicle, and is therefore safe.

\subsubsection{Induction}
Now, suppose the $i^{\text{th}}$ vehicle is following with at least $\hat{d}_{\text{long}}$
of distance, and is safe from the vehicles in front of it. Denote the $(i+1)^{\text{th}}$ as
vehicle 1, the $i^{\text{th}}$ vehicle as vehicle 2, and the $(i-1)^{\text{th}}$ vehicle as vehicle 3.
The distance between vehicle 1 and vehicle 2 is $\hat{d}_{\text{long}}$.
If vehicle 2 brakes or swerves, vehicle 1 is at least $d_{\textit{s,b}}$ and $d_{\textit{b,s}}$ away
from it, and is safe from vehicle 2 if it responds with a swerve or brake, respectively.

If vehicle 1 swerves in response to vehicle 2's brake, there are 2 cases to consider.
The first case is if vehicle 3 was braking.
Since vehicle 2 was assumed to be safe from vehicle 3, $x_2(t) \leq x_3(t), \forall t$.
Combining this with the fact that $d_{\textit{s,b}}$ is sufficient for vehicle 1 to swerve safely from
vehicle 2, vehicle 1 must be safe from vehicle 3 if vehicle 3 brakes.

If vehicle 3 was swerving, $d_{\textit{s,s}}(v_1, v_3, 2\rho)$ is a sufficient distance for vehicle 1 to follow
vehicle 3 to ensure safety.
This case is illustrated in Figure~\ref{fig:utopia_swerve}.
The reaction delay is doubled to account for the reaction propagating through 2 vehicles instead
of the usual one. Since vehicle 2 was assumed to be safe from vehicle 3, $d_{\textit{s,b}}(v_2, v_3, \rho)$ is
a lower bound on vehicle 2's following distance from vehicle 3. This means that in this case, 
$d_{\textit{s,s}}(v_{1}, v_{3}, 2\rho) - d_{\textit{s,b}}(v_{2}, v_{3}, \rho)$ is a sufficient
following distance between vehicle 1 and 2 to guarantee safety.

If vehicle 1 brakes in response to vehicle 2's swerve, as before there are 2 cases to consider.
The first case is if vehicle 3 was swerving. As before, since vehicle 2 was assumed to be safe
from vehicle 3, $x_2(t) \leq x_3(t), \forall t$.
Combining this with the fact that $d_{\textit{b,s}}$ is sufficient for vehicle 1 to brake safely from
vehicle 2's swerve, vehicle 1 must be safe from vehicle 3's swerve.

If vehicle 3 was braking, $d_{\textit{b,b}}(v_1, v_3, 2\rho)$ is a sufficient distance for vehicle 1 to follow
vehicle 3 to ensure safety.
This case is illustrated in Figure~\ref{fig:utopia_brake}.
Again, the reaction delay is doubled to account for propagation between two vehicles. Since vehicle 2
was assumed to be safe from vehicle 3, $d_{\textit{s,b}}(v_2, v_3, \rho)$ is again a lower bound on vehicle 2's
following distance. Thus, in this case, 
$d_{\textit{b,b}}(v_{1}, v_{3}, 2\rho) - d_{\textit{s,b}}(v_{2}, v_{3}, \rho)$ is a sufficient
following distance between vehicle 1 and 2 to guarantee safety.

Since $\hat{d}_{\text{long}}$ is greater or equal to each of these following distances, vehicle 1
is safe, and thus the $(i+1)^{\text{th}}$ is safe. By induction, any block of vehicles where each vehicle maintains the following distance given in \Cref{eq:d_long_universal} is safe, and as a result, the entire system is safe.
\end{proof}

At high speeds, this new following distance can be used to allow for tighter following between agents. At low speeds, the agents can revert to the braking following distance used in RSS. 

If the positions of vehicles 2 and 3, $d_2$ and $d_3$ respectively, are
known as well as their speeds, the following distance can be improved 
further. If we denote $d_{2,3} = d_3 - d_2$, then by the same logic in 
the preceding proof, a sufficient longitudinal safe distance is
\begin{multline}
\hat{d}_{\text{long}} = \max(d_{\textit{b,s}}(v_{1}, v_{2}, \rho), d_{\textit{b,s}}(v_{1}, v_{2}, \rho), \\
d_{\textit{s,s}}(v_{1}, v_{3}, 2\rho) - d_{2,3},
d_{\textit{b,b}}(v_{1}, v_{3}, 2\rho) - d_{2,3}).
\end{multline}

A comparison between the RSS following distance and these new following
distances across a range of speeds is shown in
Figure~\ref{fig:d_safe_combined}. In the plot, all vehicles are moving at the same speed.
Since $d_{2,3}$ can vary, for illustration 
purposes we assume each vehicle follows the agent in front of it at 
\begin{multline}
\hat{d}_{\text{long}} = \max(d_{\textit{b,s}}(v_{1}, v_{2}, \rho), d_{\textit{b,s}}(v_{1}, v_{2}, \rho), \\
\frac{d_{\textit{s,s}}(v_{1}, v_{3}, 2\rho)}{2},
\frac{d_{\textit{b,b}}(v_{1}, v_{3}, 2\rho)}{2}).
\end{multline}
This results in a uniform following distance across all agents when they are moving at the same speed.

\section{Validation and Results}
\label{sec:validation_and_results}

To validate our bicycle model assumptions, we first check the validity of our conservative
upper bound on the required swerve distance by computing a lower bound. 
We then use a dynamic vehicle 
model~\cite{gerdts_2005} to see if our computed swerve distances are reasonable approximations. 
The lower bound is computed and compared to the upper bound distance, as well as the relevant
braking distance, in Section~\ref{sec:swerve_lower_bound}. In 
Section~\ref{sec:swerve_experimental_setup}, we compare our swerve clearance distance, as
computed in Section~\ref{sec:method_swerve_for_brake}, to swerves from the dynamic
model.

\subsection{Lower Bound Validation}
\label{sec:swerve_lower_bound}

To compute a lower bound on the longitudinal swerve clearance distance $x_c$, we
use the particle model in \Cref{eq:pm}. We
set the minimum $a_x$ and maximum $a_y$ values to be $-a_{\min,\text{brake}}$ and $a_{\min}^{\text{lat}}$, respectively,  
from the bicycle model. This ensures that any acceleration possible under the bicycle 
model is also possible under the particle model.

For a particle model, maximal lateral acceleration towards $y_c$ as well as maximal longitudinal deceleration leads to lateral clearance in the shortest longitudinal distance $\bar{x}_c$~\cite{shiller_sundar_1998}. Thus, we have that $\bar{x}_c \leq x_c$ for any other manoeuvre feasible for the particle model.

Finally, for computing the clearance, we use an inner approximation
of the vehicle's chassis during rotation. To do so, we use the square inscribed
on the circle of radius $b_l$ centred on the centre of mass with side length $2d_i'$. 
This is shown in Figure~\ref{fig:inner_approx}. Through this inner approximation, we have that $d_i' \leq d'$ for any possible chassis rotation. This implies
that anything the chassis can clear during the swerve will be cleared by the 
inner square. 
If we use $x_f$ as in Section~\ref{sec:method_swerve_for_brake}, a lower bound on the longitudinal safe distance, denoted by $\bar{d}_{\text{long}}$, is given by
\begin{equation}
\label{eq:lower_bound_swerve}
\bar{d}_{\text{long}} = v_r \rho + \frac{1}{2}a_{\max,\text{accel}}\rho^2 + \bar{x}_c - x_f.  
\end{equation}
\begin{theorem}
\Cref{eq:lower_bound_swerve} gives a longitudinal safe distance
necessary for safety when swerving for a braking lead vehicle.
\end{theorem}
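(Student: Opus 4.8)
The plan is to establish \Cref{eq:lower_bound_swerve} as a genuine lower bound on the true minimum safe following distance $d^\star$ for the bicycle model — the quantity sandwiched between this necessary bound and the sufficient distance $d_{\textit{s,b}}$ of \Cref{eq:d_long_swerve_for_brake}. Calling $\bar{d}_{\text{long}}$ \emph{necessary} is exactly the statement $\bar{d}_{\text{long}} \le d^\star$: since any safe following distance is at least $d^\star$, it is in particular at least $\bar{d}_{\text{long}}$. The whole argument therefore reduces to showing that $\bar{d}_{\text{long}}$ is obtained from the true requirement by relaxations that can only decrease it.

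First I would invoke the two relaxations set up in this section, checking that each pushes the computed distance downward. The particle model of \Cref{eq:pm}, with acceleration bounds fixed at $-a_{\min,\text{brake}}$ and $a_{\min}^{\text{lat}}$, admits every trajectory feasible under the bicycle model \eqref{eq:bicycle_model}; hence the particle-optimal evasion (maximal lateral acceleration toward $y_c$ with maximal longitudinal braking, cited from~\cite{shiller_sundar_1998}) reaches lateral clearance in a longitudinal distance $\bar{x}_c$ no larger than the clearance distance $x_c^\star$ of the optimal bicycle-feasible swerve, so $\bar{x}_c \le x_c^\star$. Second, the inscribed-square inner approximation of Figure~\ref{fig:inner_approx} obeys $d_i' \le d'$, so the inner square attains lateral clearance no later — in longitudinal distance — than the true rotating chassis. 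Both inequalities point the right way: each makes the distance the rear vehicle must cover to clear the lead vehicle smaller than it truly is.

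Next I would assemble these into the bound, holding the lead vehicle's travel $x_f$ fixed at the value from Section~\ref{sec:method_swerve_for_brake}. Under the RSS worst case, during the reaction delay $\rho$ the rear vehicle accelerates maximally, covering $v_r\rho + \tfrac{1}{2}a_{\max,\text{accel}}\rho^2$, then executes the particle-optimal swerve, covering a further $\bar{x}_c$ until the inner square clears laterally, while the lead vehicle brakes through $x_f$. The net longitudinal distance the rear must out-run the lead before clearing is exactly the right-hand side of \Cref{eq:lower_bound_swerve}. Because $\bar{x}_c \le x_c^\star$ and $d_i' \le d'$ with $x_f$ unchanged, this expression is term-by-term no greater than the analogous true requirement, giving $\bar{d}_{\text{long}} \le d^\star$. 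Equivalently, at any initial gap below $\bar{d}_{\text{long}}$ even this optimistically relaxed manoeuvre fails to reach clearance before the gap closes, so a collision is unavoidable and no bicycle-feasible swerve can be safe.

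The main obstacle is bookkeeping the inequality directions so the result is a true lower bound and not merely a different sufficient condition. One must justify that replacing $d'$ by the smaller $d_i'$ inside the clearance computation and omitting the outer buffer $d' + d$ and the positive-part clamp of \Cref{eq:d_long_swerve_for_brake} are precisely the relaxations that pull $\bar{d}_{\text{long}}$ below $d^\star$, rather than changes that could invalidate the comparison. A related subtlety is confirming that the particle-optimality result cited from~\cite{shiller_sundar_1998} applies under the same boundary data — reaching $y_c$ laterally while braking at $a_{\min,\text{brake}}$ — so that $\bar{x}_c \le x_c^\star$ holds for the specific manoeuvre family being compared and not only in an abstract sense.
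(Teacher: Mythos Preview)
Your proposal follows the same route as the paper: relax the bicycle model to the particle model of \Cref{eq:pm} (so that $\bar{x}_c$ lower-bounds the clearance distance of any bicycle-feasible swerve), replace the rotating chassis by the inscribed-square inner approximation of Figure~\ref{fig:inner_approx} (so that $d_i' \le d'$), and assemble these relaxations into the bound. That high-level structure is correct and matches the paper.

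There is, however, one concrete step the paper carries out that your proposal omits. After fixing the particle-optimal swerve, the paper writes the longitudinal gap between the two vehicles explicitly as a piecewise-quadratic function of time on the intervals $[0,\rho]$ and $[\rho,t_c]$, observes that the relative acceleration is negative on each interval (since $-a_{\max,\text{brake}}-a_{\max,\text{accel}}<0$ and $-a_{\max,\text{brake}}+a_{\min,\text{brake}}<0$), and concludes the gap is \emph{concave} on each piece. Concavity forces the minimum gap to occur at the endpoints $\{0,\rho,t_c\}$; the paper then argues $\rho$ is a local maximum, so the binding constraint sits at $t_c$. This is what certifies that the expression $v_r\rho+\tfrac{1}{2}a_{\max,\text{accel}}\rho^2+\bar{x}_c-x_f$ really is the safe distance for the particle swerve and not merely the gap evaluated at one convenient instant.

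Your term-by-term comparison $\bar{d}_{\text{long}}\le d^\star$ tacitly assumes $d^\star$ has the same structural form --- reaction distance plus clearance distance minus lead travel --- which is only valid once you know the critical time is $t_c$. Likewise, your sentence ``fails to reach clearance before the gap closes'' presumes the gap first closes at (or is minimized at) $t_c$. Without the concavity step you have not excluded the possibility that the gap dips to zero strictly before $t_c$, which would alter the safe-distance formula and break the comparison. You correctly identify ``bookkeeping the inequality directions'' as the main obstacle; the concavity computation is exactly the missing piece of that bookkeeping, and adding it (a two-line check on the signs of the second derivatives) brings your argument in line with the paper's.
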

\begin{proof}
The clearance time and associated longitudinal distance at which point the particle model reaches $y_c$ are given by
\begin{align}
t_c = \sqrt{\frac{2y_c}{a^{\text{lat}}_{\min}}}, &&
\bar{x}_c = v t_c  - \frac{a_{\min,\text{brake}} t_c^2}{2} + d_i'.
\label{eq:lower_bound_xc}
\end{align}

By the acceleration constraints imposed on the particle model,
any feasible acceleration in the bicycle model is feasible for the particle model.
In addition, the manoeuvre is optimal with respect to longitudinal distance travelled
for the particle model. Both of these points imply that the $\bar{x}_c$ in \Cref{eq:lower_bound_xc}
is a lower bound on any feasible $x_c$ for the bicycle model. Next, the inner approximation
implies that for any manoeuvre, if the chassis can clear, the square with side length
$2d_i'$ can clear as well, allowing a buffer of $d_i'$ to be added.

If we denote the initial longitudinal distance between the vehicles as $x_2$, then the distance between the swerving vehicle and the braking vehicle during the reaction delay is given by $x_2 - d_i' - d_r + v_f t - \frac{1}{2}a_{\max,\text{brake}}t^2-v_r t-\frac{1}{2}a_{\max}t^2$.
If we denote the distance between the vehicles at the end of the reaction delay as $x_{\rho}$, then after the reaction delay the distance between the vehicles is given by $x_{\rho} 
+ v_f t - \frac{1}{2}a_{\max,\text{brake}}t^2
- v_{r,\rho} t + \frac{1}{2}a_{\min,\text{brake}}t^2 $.
Since $-a_{\max,\text{brake}}-a_{\max} < 0$ and 
$-a_{\max,\text{brake}}+a_{\min,\text{brake}} < 0$, the distance between the swerving and braking vehicle is concave on both intervals. This implies that the minimum gap occurs at the boundaries of the time intervals $\{0, \rho, t_c\}$. Since the distance between the vehicles is differentiable everywhere, the time $\rho$ is a critical point only if the derivative is zero. In this case, since the distance is concave before and after time $\rho$, the derivative is positive for $t<\rho$ and negative for $t>\rho$, implying the distance at time $\rho$ is a local maximum. 
Taking everything together, assuming the vehicles are not already in collision at $t=0$, this
implies that \Cref{eq:lower_bound_swerve} is a lower bound on the longitudinal safe distance required for a swerve feasible for the bicycle model.
\end{proof}

A comparison between the lower bound and upper bound on the longitudinal distance
travelled during a swerve, as well as the equivalent braking distance, is shown
in Figure~\ref{fig:lower_upper_brake_plot}. The plot is across a range of
initial speeds.

\begin{figure}[thpb]
  \centering
  \includegraphics[scale=0.4]{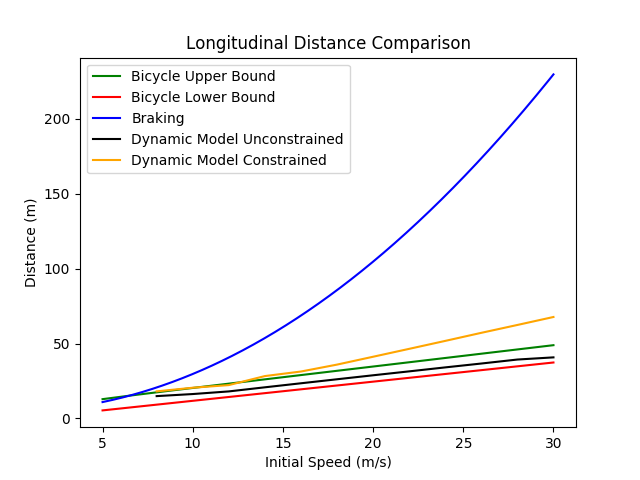}
  \caption{
    \label{fig:lower_upper_brake_plot}
    A comparison of the longitudinal distance travelled between swerve and brake manoeuvres,
    for varying initial velocities. For very low speeds the dynamic model swerves behave poorly and are omitted.
  }
\end{figure}

\subsection{Dynamic Model Validation}
\label{sec:swerve_experimental_setup}
Next, we verify that our kinematic approximation is valid by comparing the longitudinal 
swerve distance under a dynamic model to the distance computed in
the preceding sections. We analyze both the cases when the dynamic model is constrained by $a_{\min,\text{brake}}$ and $a_{\min}^{\text{lat}}$, and when it is not. We wish to show that our acceleration
constrained bicycle model swerve distances bound the swerve
manoeuvre distances of a dynamic model employing both swerving and
braking whose accelerations are unconstrained by comfort, but instead constrained by feasibility.  We would also like to see at which speeds the constrained kinematic bicycle swerve distance is close to the dynamic model swerve distance when the dynamic model is constrained by comfort.
We focus on the ability of the dynamic model to swerve, and not an associated controller, and as a result
generate the manoeuvres in open loop. However, doing a 
grid search over all possible control inputs to find the best swerves is impractical. 
Instead, we assume that the steering input is broken into 4 equal length intervals of time, 
and perform binary search over steering rate magnitudes
until the boundary conditions in \Cref{eq:bcs} are satisfied.
In addition, we also perform linear search over brake input and the total time of the
manoeuvre and select the manoeuvre that minimizes the longitudinal swerve distance $x_c$. Note that these generated swerves are not optimal for the dynamic model, but are feasible.

The parameters used in our validation are summarized in Table~\ref{table:parameters}. We chose $a_{\min,\text{brake}}$ to
represent braking at the limit of comfort, and $a_{\max, \text{brake}}$ was
chosen to represent a hard, uncomfortable brake.
The swerves generated for various initial speeds are illustrated in 
Figure~\ref{fig:dynamic_swerves}.

\begin{table}[h]
\caption{Parameters Table}
\label{table:parameters}
\begin{center}
\begin{tabular}{@{} c c c c c c @{}}
\toprule
$m$ & 1239 kg & $l_f$ & 1.19 m & $l_r$ & 1.37 m \\
\midrule
$I_{zz}$ & 1752 $\text{kg}\cdot \text{m}^2$ & $e_{SP}$ & 0.5 m & $R$ & 0.302 m \\
\midrule
$c_w$ & 0.3 & $\rho_{\text{drag}}$ & 1.25 $\frac{\text{kg}}{\text{m}^3}$ & $A$ & 1.438\\
\midrule
$B_f$ & 10.96 & $C_f$ & 1.3 & $D_f$ & 4560.4 \\
\midrule
$E_f$ & -0.5 & $B_r$ & 12.67 & $C_r$ & 1.3 \\
\midrule
$D_r$ & 3947.81 & $E_r$ & -0.5 & $a_{\max}^{\text{lat}}$ & 4.0 $\frac{\text{m}}{\text{s}^2}$ \\
\midrule
$a_{\min}^{\text{lat}}$ & 2.0 $\frac{\text{m}}{\text{s}^2}$ & $a_{\min,\text{brake}}$ & 2.0 $\frac{\text{m}}{\text{s}^2}$ & $\mu$ & 0.1 m \\
\midrule
$a_{\max,\text{accel}}$ & 2.0 $\frac{\text{m}}{\text{s}^2}$ & $a_{\max,\text{brake}}$ & 8.0 $\frac{\text{m}}{\text{s}^2}$ & $\rho$ & 0.1 s \\
\midrule
$\alpha$ & 3.7 m & $ d_r $ & 2.3 m & $d_f$ & 2.4 m  \\
\midrule
$b_r$ & 0.9 m & $b_l$ & 0.9 m & $\delta_{\max}$ & $\frac{\pi}{6}$ \\
\hline
\end{tabular}
\end{center}
\end{table}

Using these computed swerves, we then compute the lateral clearance distance $y_c$ as before
and find the longitudinal swerve distance travelled $x_c$ that occurs at time $t_c$.
Substituting this value in at \Cref{eq:d_long_swerve_for_brake,eq:d_long_brake_for_swerve}
then gives the required longitudinal safe distance for the dynamic model. For the range of
initial vehicle speeds where swerving is more efficient than braking, the longitudinal 
safe distances required for the dynamic model
are plotted and compared to those computed in Section~\ref{sec:method} in
Figure~\ref{fig:lower_upper_brake_plot}. 

\begin{figure}[thpb]
  \centering
  \includegraphics[scale=0.165]{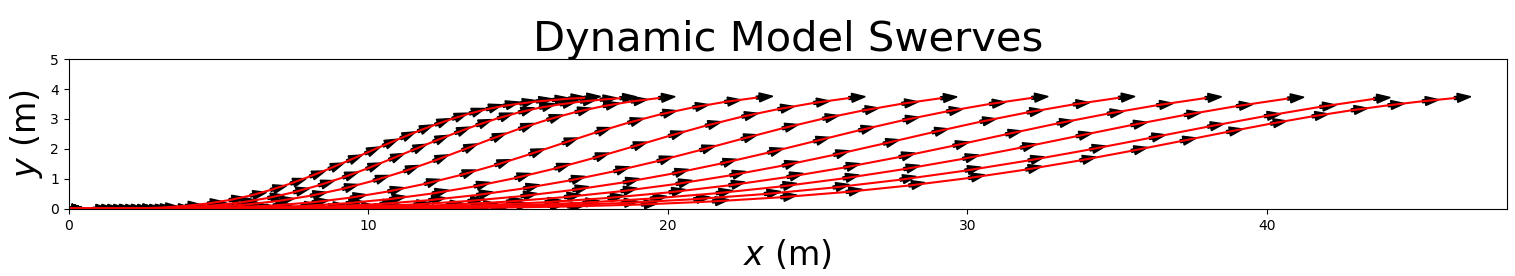}
  \caption{
    \label{fig:dynamic_swerves}
    The swerve manoeuvres generated according to the dynamic model. Each
    swerve is for a different initial speed in the interval [10, 30] $\frac{\text{m}}{\text{s}}$. The
    arrows denote the heading of the vehicle.
  }
\end{figure}

\addtolength{\textheight}{-9.5cm}   % This command serves to balance the column lengths
                                  % on the last page of the document manually. It shortens
                                  % the textheight of the last page by a suitable amount.
                                  % This command does not take effect until the next page
                                  % so it should come on the page before the last. Make
                                  % sure that you do not shorten the textheight too much.
\subsection{Simulation Results}

In Figure~\ref{fig:lower_upper_brake_plot}, we compare the braking distance and the swerve longitudinal 
distance travelled when avoiding a stationary object. This plot illustrates the advantage
of swerves; for initial rear vehicle speeds greater than 
8 $\frac{\text{m}}{\text{s}}$, 
the swerves reach safety using less longitudinal distance than
braking does. We note that as $a_{\min,\text{brake}}$ is
increased, the crossover point of velocity where swerves become
advantageous increases as well. However, due to the quadratic
nature of the braking distance, swerves always eventually become
more advantageous at high speeds. From the figure, we can see that
when the accelerations of the dynamic model are constrained, the
swerve distance of the kinematic model is a reasonable approximation of the dynamic model, with error between 0.7-7.7\%, which is reasonable to expect for a kinematic approximation~\cite{polack_altche_dandrea-novel_fortelle_2017}.
In the case where the dynamic model is unconstrained by comfort (only by feasibility), the longitudinal swerve distance required is within 15.6-24.0\% error of the
upper bound distance of the kinematic model, and is completely bracketed by the kinematic upper and lower bounds across a range of speeds from 8-30 $\frac{\text{m}}{\text{s}}$.
This shows that our acceleration constrained kinematic approximation can accurately approximate the 
swerve distance required by the constrained dynamic
single-track model up to mid-ranged initial speeds, and can 
bound the swerve distance required by the unconstrained 
dynamic model across
the entire range of speeds.

In Figure~\ref{fig:d_safe_combined}, the universal safe following distance required as clearance when
using swerves is compared to the braking following distance, as
$a_{\min,\text{brake}}$ is varied from 2, 3, and 4
$\frac{\text{m}}{\text{s}^2}$. 
In these plots, all vehicles are moving
at an equal speed, displayed on the x-axis. These plots show that as
the speeds of the vehicles increase, the following distance decreases
when allowing swerve manoeuvres, when compared to braking alone. As
$a_{\min,\text{brake}}$ increases, the speeds where swerves become more
effective also increases. For increasing $a_{\min,\text{brake}}$ of 2, 3, and 4 $\frac{\text{m}}{\text{s}^2}$, these
speeds are 8.1, 11.4, and 14.6 $\frac{\text{m}}{\text{s}}$, respectively. The universal
following distance is also reduced by up to 42\% across all 3 values of
$a_{\min,\text{brake}}$ when using swerves as opposed to braking alone.

\begin{figure}[thpb]
\centering
  \includegraphics[scale=0.4]{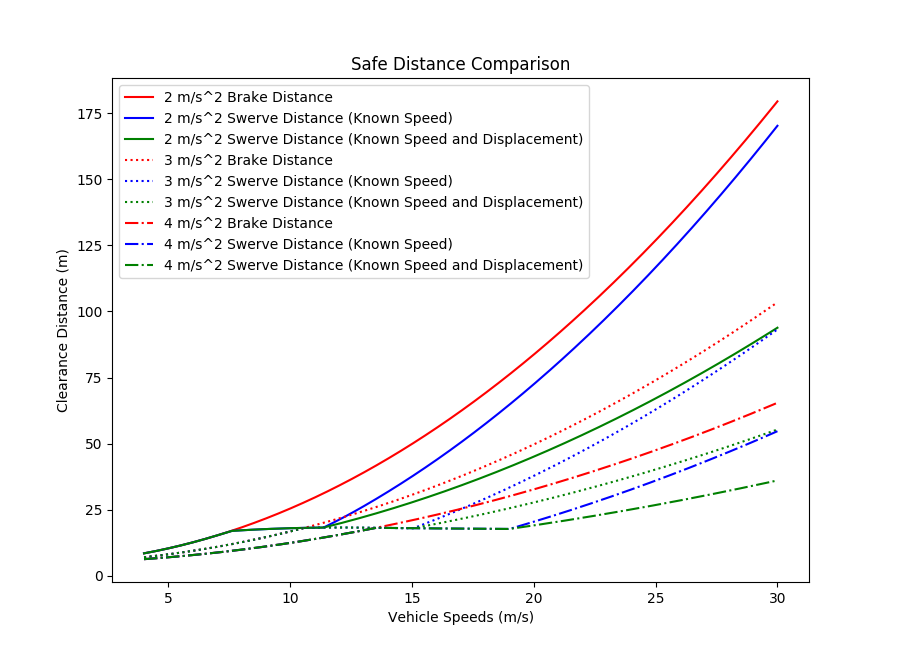}
  \caption{
    \label{fig:d_safe_combined} The universal following distance required for safety when $a_{\min,\text{brake}}$ is 2, 3, and 4 $\frac{m}{s^2}$.
  }
\end{figure}

\section{Conclusions}

In this work, we outlined a method for extending the RSS framework to include swerve 
manoeuvres in addition to the standard brake manoeuvre available in the framework.
We proved the safety of these manoeuvres under a set of reasonable assumptions about 
responsible behaviour, while incorporating the original assumptions in the RSS framework.
This extended framework results in a significant reduction in following distance at high speeds. In addition, the kinematic model was shown to conservatively bound the longitudinal distance required for swerves executed for the dynamic model.

In future work, we would like to extend the inclusion of swerve manoeuvres to
more general cases. One option would be to generalize the swerve manoeuvre to
arbitrary Frenet frames, as opposed to straight lines. One could also compute
bounds on the error from using a straight line approximation to the Frenet frame.
Further experimental work of the RSS framework and its extensions, through on-car 
testing or scenario simulation, would also be beneficial.

%%%%%%%%%%%%%%%%%%%%%%%%%%%%%%%%%%%%%%%%%%%%%%%%%%%%%%%%%%%%%%%%%%%%%%%%%%%%%%%%

%%%%%%%%%%%%%%%%%%%%%%%%%%%%%%%%%%%%%%%%%%%%%%%%%%%%%%%%%%%%%%%%%%%%%%%%%%%%%%%%

%%%%%%%%%%%%%%%%%%%%%%%%%%%%%%%%%%%%%%%%%%%%%%%%%%%%%%%%%%%%%%%%%%%%%%%%%%%%%%%%

%%%%%%%%%%%%%%%%%%%%%%%%%%%%%%%%%%%%%%%%%%%%%%%%%%%%%%%%%%%%%%%%%%%%%%%%%%%%%%%%

\bibliography{citations}{}
\bibliographystyle{IEEEtran}

\end{document}